\newtheorem{theorem}{Theorem}
\newtheorem{definition}{Definition}
\def\eqref#1{equation~\ref{#1}}
\def\1{\bm{1}}
\DeclareMathAlphabet{\mathsfit}{\encodingdefault}{\sfdefault}{m}{sl}
\SetMathAlphabet{\mathsfit}{bold}{\encodingdefault}{\sfdefault}{bx}{n}
\title{FedGT: Federated Node Classification with Scalable Graph Transformer}
\author{%
	Zaixi Zhang$^{1,2}$, Qingyong Hu$^3$, Yang Yu$^{1,2}$, Weibo Gao$^{1,2}$, Qi Liu$^{1,2}$\thanks{Qi Liu is the corresponding author.}\\
	1: Anhui Province Key Lab of Big Data Analysis and Application,\\ School of Computer Science and Technology, University of Science and Technology of China\\2:State Key Laboratory of Cognitive Intelligence, Hefei, Anhui, China\\3:Hong Kong University of Science and Technology\\
	zaixi@mail.ustc.edu.cn, qhuag@cse.ust.hk, yfly1613@mail.ustc.edu.cn\\ weibogao@mail.ustc.edu.cn, qiliuql@ustc.edu.cn
}
\begin{document}

\maketitle

\begin{abstract}
Graphs are widely used to model relational data. As graphs are getting larger and larger in real-world scenarios, there is a trend to store and compute subgraphs in multiple local systems. For example, recently proposed \emph{subgraph federated learning} methods train Graph Neural Networks (GNNs) distributively on local subgraphs and aggregate GNN parameters with a central server. However, existing methods have the following limitations: (1) The links between local subgraphs are missing in subgraph federated learning. This could severely damage the performance of GNNs that follow message-passing paradigms to update node/edge features.
(2) Most existing methods overlook the subgraph heterogeneity issue, brought by subgraphs being from different parts of the whole graph.
To address the aforementioned challenges, we propose a scalable \textbf{Fed}erated \textbf{G}raph \textbf{T}ransformer (\textbf{FedGT}) in the paper. 
Firstly, we design a hybrid attention scheme to reduce the complexity of the Graph Transformer to linear while ensuring a global receptive field with theoretical bounds.
Specifically, each node attends to the sampled local neighbors and a set of curated global nodes to learn both local and global information and be robust to missing links. The global nodes are dynamically updated during training with an online clustering algorithm to capture the data distribution of the corresponding local subgraph. Secondly, FedGT computes clients' similarity based on the aligned global nodes with optimal transport. The similarity is then used to perform weighted averaging for personalized aggregation, which well addresses the data heterogeneity problem. 
Finally, extensive experimental results on 6 datasets and 2 subgraph settings demonstrate the superiority of FedGT.
\end{abstract}

\section{Introduction}
Many real-world relational data can be represented as graphs, such as social networks \citep{fan2019graph}, molecule graphs \citep{satorras2021n}, and commercial trading networks \citep{xu2021towards}. Due to the ever-growing size of graph \citep{hu2020open} and stricter privacy constraints such as GDPR \citep{voigt2017eu}, it becomes more practical to collect and store sensitive graph data in local systems instead in a central server. For example, 
banks may have their own relational databases to track commercial relationships between companies and customers. In such scenarios, it is desirable to collaboratively train a powerful and generalizable graph mining model for business, e.g., loan prediction with distributed subgraphs while not sharing private data.
To this end, subgraph federated learning \citep{FedSage, FedGNN, FedPerGNN, xie2022federatedscope, he2021fedgraphnn} has been recently explored to resolve the information-silo problem and has shown its advantage in enhancing the performance and generalizability of local graph mining models such as Graph Neural Networks (GNNs) \citep{wu2020comprehensive, zhou2020graph}. 

However, subgraph federated learning has brought unique challenges. Firstly, 
different from federated learning in other domains such as CV and NLP, where data samples of images and texts are isolated and independent, nodes in graphs are connected and correlated. Therefore, there are potentially missing links/edges between subgraphs that are not captured by any client (illustrated in Figure \ref{framework}), leading to severe information loss or bias. This becomes even worse when GNNs are used as graph mining models because most GNNs follow a message-passing scheme that aggregates node embeddings along edges \citep{gilmer2017neural, hamilton2017inductive}. To tackle this problem, existing works try to recover missing neighborhood information with neighborhood generator \citep{FedSage, peng2022fedni} or share node embeddings of local subgraphs among clients \citep{chen2021fedgraph}. However, these solutions either struggle to fully recover the missing links or bring risk to data privacy protection.

Another important challenge in subgraph federated learning is data heterogeneity which is also illustrated in Figure \ref{framework}. Some subgraphs may have overlapping nodes or similar label distributions (e.g., client subgraph 2$\&$3 in Figure \ref{framework}). On the contrary, some subgraphs may be completely disjoint and have quite different properties (e.g., client subgraph 1$\&$3 in Figure \ref{framework}). Such a phenomenon is quite common in the real world since the data of clients may come from different domains and comprise different parts of the whole graph. However, most of the existing methods \citep{FedSage,FedGNN,he2021fedgraphnn}  fail to consider the data heterogeneity issue and may lead to sub-optimal performance for each client. More examples and discussions of the challenges in subgraph federated learning are included in the Appendix. \ref{app:challenges}.

To address the aforementioned challenges, we propose a scalable \textbf{F}ederated \textbf{G}raph \textbf{T}ransformer (\textbf{FedGT}) in this paper. 
Firstly, in contrast to GNNs that follow message-passing schemes and focus on local neighborhoods, Graph Transformer 
has a global receptive field to learn long-range dependencies
and is therefore more robust to missing links.
However, the quadratic computational cost of the vanilla transformer architecture inhibits its direct application in the subgraph FL setting.
In FedGT, a novel hybrid attention scheme is proposed to bring the computational cost to linear with theoretically bounded approximation error.
Specifically, in the computation of clients, each node attends to the sampled neighbors in the local subgraph and a set of curated global nodes representing the global context. The global nodes are dynamically updated during the training of FedGT with an online clustering algorithm 
and serve the role of supplementing missing information in the hybrid attention.   
Secondly, to tackle the data heterogeneity issue, FedGT leverages a personalized aggregation scheme that performs weighted averaging based on the estimated similarity matrix between clients. The similarity is calculated based on global nodes from different clients since they can reflect their corresponding data distribution of local subgraphs. Since there are no fixed orders in the set of global nodes, we apply optimal transport to align two sets of global nodes before calculating clients' similarity. 
To further protect the privacy of local clients,
we also apply local differential privacy (LDP) techniques.
Finally, extensive experiments on 6 datasets and 2 subgraph settings demonstrate that FedGT can achieve state-of-the-art performance.

\section{Related Work}
\subsection{Federated Learning and Federated Graph Learning}
Federated Learning (FL) \citep{mcmahan2017communication, yang2019federated, arivazhagan2019federated, t2020personalized, fallah2020personalized, beaussart2021waffle} is an emerging collaborative learning paradigm over decentralized data. 
Specifically, multiple clients (e.g., edge data centers, banks, companies) jointly learn a machine learning model (i.e., global model) without sharing their local private data with the cloud server or other clients. 
However, different from the commonly studied image and text data in FL, graph-structure data is correlated and brings unique challenges to FL such as missing cross-client links. Therefore, exploring federated graph learning approaches that tackle the unique challenges of graph data is required. 

Federated Graph Learning (FGL) \citep{wang2022federatedscope} can be classified into graph and subgraph FL according to the types of graph tasks. Graph FL methods \citep{xie2021federated, SpreadGNN} assume that different clients have completely disjoint graphs (e.g., molecular graphs), which is similar to common federated learning tasks on CV (e.g., federated image classification). 
For example, GCFL+ \citep{xie2021federated} proposes a graph clustered federated learning framework to deal with the data heterogeneity in graph FL.
It dynamically bi-partitions a set of clients based on the gradients of GNNs.
On the contrary, in subgraph FL \citep{FedSage,FedGNN,FedPerGNN,peng2022fedni,he2021fedgraphnn, baek2023personalized}, each local client holds a subgraph that belongs to the whole large graph (e.g., social network). The subgraphs are correlated and there may be missing links across clients. To tackle the unique challenges in subgraph FL, FedSage+ and FedNI \citep{FedSage, peng2022fedni} utilize neighbor generators to recover missing cross-client links and neighbor nodes. However, they only focus on immediate neighbors and can hardly recover all the missing information. On the other hand, FedGraph \citep{ he2021fedgraphnn} augments local data by requesting node information from other clients, which may compromise data privacy constraints. 
In this paper, we focus on the subgraph FL for node classification. Unlike existing GNN-based methods, our method tackles the problem from a completely different perspective by leveraging powerful and scalable Graph Transformer architectures.

\subsection{Graph Neural Network and Graph Transformer}
Graph Neural Networks (GNNs) \citep{kipf2016semi, hamilton2017inductive, han2021adaptive, zhang2019heterogeneous, bojchevski2020scaling, jin2020graph} are widely used in a series of graph-related tasks such as node classification and link prediction. Typically, GNNs follow a message-passing scheme that iteratively updates the node representations by aggregating representations from neighboring nodes. However, GNNs are known to have shortcomings such as over-smoothing \citep{chen2020measuring} and over-squashing \citep{topping2021understanding}, which are further exacerbated in the subgraph FL setting where cross-client links are missing.

In recent years, Graph Transformer \citep{ying2021transformers, kreuzer2021rethinking} has shown its superiority in graph representation learning. Most works of Graph Transformer focus on graph classification on small graphs \citep{ying2021transformers, kreuzer2021rethinking}, where each node is regarded as a token and special positional encodings are designed to encode structural information. 
For instance, Graphormer \citep{ying2021transformers} uses centrality, edge, and hop-based encodings and achieves state-of-the-art performance on molecular property prediction tasks. However, the all-pair nodes attention scheme has a quadratic complexity and is hard to scale to large graphs.
Recently, some works attempt to alleviate the quadratic computational cost and build scalable Graph Transformers based on node sampling or linear approximation \citep{dwivedi2020generalization, zhang2020graph, zhao2021gophormer, wunodeformer, zhang2022hierarchical, rampavsek2022recipe, chen2023nagphormer}. However, most of these methods sacrifice the advantage of global attention in the Transformer architecture and can hardly extend to the more challenging subgraph FL setting due to issues such as data heterogeneity.

\section{Preliminaries}
\subsection{Problem Definition}
\textbf{Node Classification.} We first introduce node classification in the centralized setting. Consider an unweighted graph $G=(A, X)$ where $A \in \mathbb{R}^{n\times n}$ represents the symmetric
adjacency matrix with $n$ nodes, and $X \in \mathbb{R}^{n\times p}$ is the attribute matrix of $p$ attributes per node. The element $A_{ij}$ in the adjacency matrix equals 1 if there exists an edge between node $v_i$ and node $v_j$, otherwise $A_{ij}=0$. The label of node $v_i$ is $y_i$.
In the semi-supervised node classification problem, the classifier (e.g., GNN, graph transformer) has the knowledge of $G=(A,X)$ and a subset of node labels. The goal is to
predict the labels of the other unlabeled nodes by learning a classifier.

\noindent\textbf{Subgraph Federated Learning.} In subgraph FL, there is a central server $S$ coordinating the training process, and a set of local clients $\{C_1, C_2 \dots C_M\}$ taking part in the training. $M$ is the number of clients. Each local client stores a subgraph $G_i=(A_i, X_i)$ of the global graph $G$ with $n_i$ nodes. Two subgraphs from different clients might have overlaps or are completely disjoint. Note that there might be edges between two subgraphs in the whole graph but are not stored in any client (missing links).
The central server $S$ only maintains a graph mining model but stores no graph data. Each client $C_i$ cannot directly query or retrieve data from other clients due to privacy constraints.

\noindent\textbf{Training Goal. } The goal of subgraph federated learning is to leverage isolated subgraph data stored in distributed clients and collaboratively learn node classifiers $\mathcal{F}$ without sharing raw graph data. 
Considering the potential data heterogeneity between clients, we aim to train personalized classifiers $\mathcal{F}(\theta_i)$ for each client. Formally, the training goal is to find the optimal set of parameters $\{\theta_1^*, \cdots, \theta_M^*\}$ that minimizes the average of the client losses:
\begin{equation}
    \{\theta_1^*, \cdots, \theta_M^*\} = {\rm arg min}~\sum_i^M \mathcal{L}_i(\mathcal{F}(\theta_i)), ~\mathcal{L}_i(\mathcal{F}(\theta_i)) = \frac{1}{n_i} \sum_j^{n_i} l(\mathcal{F}(v_j; \theta_i), y_j),
\end{equation}
where $\mathcal{L}_i(\mathcal{F}(\theta_i))$ is the $i$-th client loss,
$l(\cdot, \cdot)$ denotes the cross-entropy loss and $y_j$ is the node label.

\subsection{Transformer Architecture}
In Graph Transformer, each node is regarded as a token. Note that we also concatenate the positional encoding vectors with the input node features to supplement positional information.
The Transformer architecture consists of a series of Transformer layers \citep{vaswani2017attention}.
Each Transformer layer has two parts: a multi-head self-attention (MHA) module and a position-wise feed-forward network (FFN). 
Let $\mathbf{H}_b=\left[\boldsymbol{h}_{1}, \cdots, \boldsymbol{h}_{b}\right]^{\top} \in \mathbb{R}^{b \times d}$ denote the input to the self-attention module where $d$ is the hidden dimension, $\boldsymbol{h}_{i} \in \mathbb{R}^{d \times 1}$ is the $i$-th hidden representation, and $b$ is the length of input.  
The MHA module firstly projects the input $\mathbf{H}_b$ to query-, key-, value-spaces, denoted as $\mathbf{Q}, \mathbf{K}, \mathbf{V}$, using three learnable matrices $\mathbf{W}_{Q} \in \mathbb{R}^{d \times d_{K}}, \mathbf{W}_{K} \in \mathbb{R}^{d \times d_{K}}$ and $\mathbf{W}_{V} \in \mathbb{R}^{d \times d_{V}}$:
\begin{equation}
    \mathbf{Q}=\mathbf{H}_b \mathbf{W}_{Q}, \quad \mathbf{K}=\mathbf{H}_b \mathbf{W}_{K}, \quad \mathbf{V}=\mathbf{H}_b \mathbf{W}_{V}.
\end{equation}
Then, in each head $i \in \{1,2,\dots, h\}$ ($h$ is the total number of heads), the scaled dot-product attention mechanism is applied to the corresponding $\{\mathbf{Q}_i, \mathbf{K}_i, \mathbf{V}_i\}$ :
\begin{equation}
   \text { head}_{i} = \operatorname{Softmax}\left( \frac{\mathbf{Q}_i \mathbf{K}_i^T}{\sqrt{d_{K}}}\right)  \mathbf{V}_i.
   \label{pairwise attention}
\end{equation}
Finally, the outputs from different heads are further concatenated and transformed to obtain the final output of MHA:
\begin{equation}
    \operatorname{MHA}(\mathbf{H})=\operatorname{Concat}\left(\text {head}_{1}, \ldots, \text {head}_{h}\right) \mathbf{W}_{O},
\end{equation}
where $\mathbf{W}_{O} \in \mathbb{R}^{d \times d}$. In this work, we employ $d_K=d_V=d/h$ for the hidden dimensions of $\mathbf{Q},\mathbf{K},$ and $\mathbf{V}$. Let $L$ be the total number of transformer layers and $\mathbf{H}^{(l)}_b$ be the set of input representations at the $l$-th layer. The transformer layer is formally characterized as:
\begin{align}
        \mathbf{H'}_b^{(l-1)} &= {\rm MHA(LN(}\mathbf{H}_b^{(l-1)})) + \mathbf{H}_b^{(l-1)} \\
        \mathbf{H}_b^{(l)} &= {\rm FFN(LN(}\mathbf{H'}_b^{(l-1)})) + \mathbf{H'}_b^{(l-1)}, ~(0 \leq l < L),
\end{align}
where layer normalizations (LN) are applied before the MHA and FFN blocks \citep{xiong2020layer}. The attention in Equation. \ref{pairwise attention} brings quadratic computational complexity to the vanilla transformer.
\section{FedGT}
\label{sec:fedgt}
\begin{figure*}[t]
    \centering
    \includegraphics[width=0.98\linewidth]{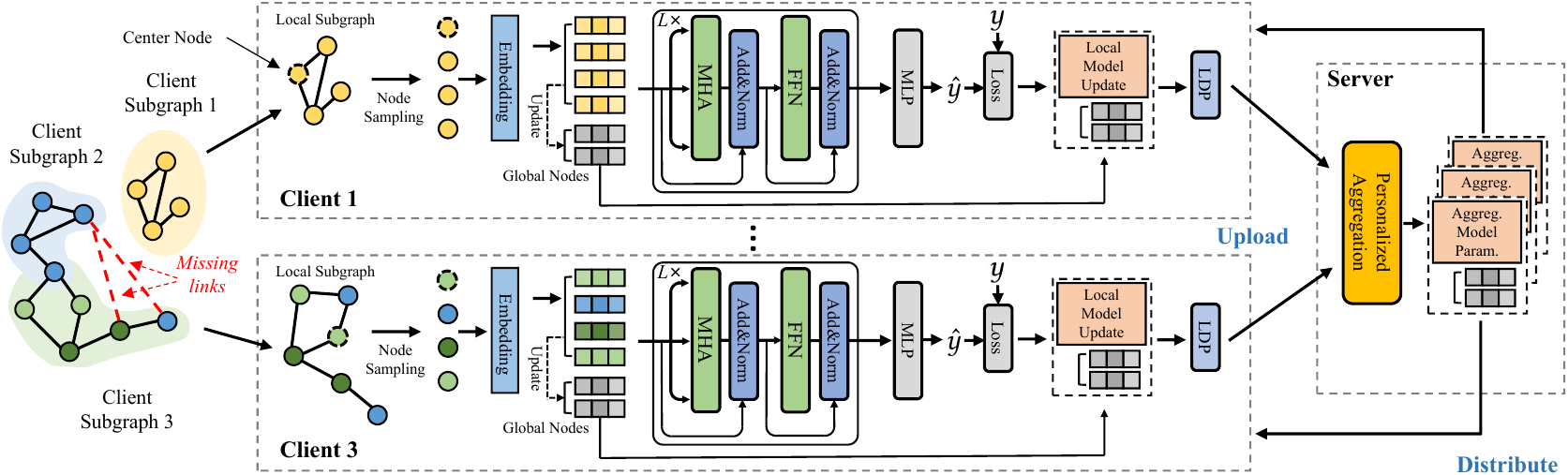}
    \caption{\textbf{The framework of FedGT}. We use a case with three clients for illustration and omit the model details of Client 2 for simplicity. The node colors indicate the node labels.}
    \label{framework}
\end{figure*}
In this section, we first show the scalable graph transformer with linear computational complexity to tackle the missing link issue in Sec. \ref{sec:hybrid attention}. Then we show the personalized aggregation to address data heterogeneity in Sec. \ref{sec:pa}. Furthermore, we demonstrate the local differential privacy in Sec. \ref{sec: ldp}. Finally, we theoretically analyze the approximation error of the global attention in Sec.\ref{theoretical analysis}. Figure \ref{framework} illustrates the framework of FedGT. Algorithm. \ref{algo:fedgt_client} and  \ref{algo:fedgt_server} in Appendix. \ref{app:algorithm} show the pseudo codes.

\subsection{Scalable Graph Transformer}
\label{sec:hybrid attention}
The quadratic computational cost of the vanilla transformer inhibits its direct applications on large-scale graphs. Especially in federated learning scenarios where clients may have limited computational budgets. Here, we propose a scalable Graph Transformer with a hybrid attention scheme: 

\noindent\textbf{Local Attention.}
Previous works \citep{zhao2021gophormer, zhang2022hierarchical} show that local information is quite important for node classification. Therefore, for each center node, we randomly sample $n_s$ neighboring nodes for attention. There are several choices for the node sampling strategy such as Personalized Page Rank (PPR) \citep{zhang2020graph}, GraphSage \citep{zhao2021gophormer}, and attribute similarity \citep{zhang2022hierarchical}. We use PPR as the default sampling strategy for its superior empirical performance and explore other strategies in Appendix \ref{node sampling}. Specifically, the Personalized PageRank \citep{page1999pagerank} matrix is calculated as: $PPR = \nu (I -(1-\nu)\overline{A}_i)^{-1}$, where factor $\nu \in [0, 1]$  (set to 0.15 in experiments), $I$ is the identity matrix and $\overline{A}_i$ denotes the column-normalized adjacency matrix of client $i$.
The PPR matrix can reflect the intimacy between nodes and we use it as the sampling probability to sample nodes for local attention.
It was shown in recent works \citep{rampavsek2022recipe, kreuzer2021rethinking}
that positional encodings (PE) are one of the most important factors in improving the performance of graph Transformers. In FedGT, we adopt the popular Laplacian positional encoding \citep{rampavsek2022recipe}: the PE vectors are concatenated with the node features to supplement the positional information. 

\begin{wrapfigure}{R}{0.48\textwidth}
\vspace{-1.8em}
\begin{minipage}{0.48\textwidth}
  \begin{algorithm}[H]
    \caption{Online Clustering for Global Nodes}
    \label{global node update}
    \leftline{\textbf{Input}: node batch $\textbf{H}_b$, momentum $\gamma$, global} 
    \leftline{nodes $\mathbf{\mu}$, count of data per cluster $c$} 
    \leftline{\textbf{Output}: Updated global nodes $\mu$}
    \begin{algorithmic}[1]
      \STATE $P$ = FindNearest($\textbf{H}_b, \mu$)
   \STATE $\mu \leftarrow c\cdot \mu\cdot \gamma + P^\top \textbf{H}_b\cdot(1-\gamma)$
   \STATE $c \leftarrow c\cdot \gamma + P^\top \bm{1} \cdot(1-\gamma)$
   \STATE $\mu \leftarrow \mu/c$
    \end{algorithmic}
  \end{algorithm}
\end{minipage}
\vspace{-1.0em}
\end{wrapfigure}

\noindent\textbf{Global Attention.} To preserve the advantage of the global receptive field, we propose to use a set of curated global nodes to approximate the global context of the corresponding subgraph.
Section. \ref{theoretical analysis} theoretically analyzes the approximation error.
Inspired by online deep clustering \citep{zhan2020online}, we propose to dynamically update the global nodes during the training of FedGT.
Algorithm \ref{global node update} shows the pseudo-codes of updating global nodes with a batch of input representations $\textbf{H}_b$. $\gamma$ is the hyperparameter of momentum to stabilize the updating process. FindNearest($\cdot, \cdot$) finds the nearest global node in Euclidean space and $P$ is the assignment matrix. The global nodes $\mu$ can be regarded as the cluster centroids of node representations and reflect the overall data distribution of the corresponding local subgraph. 

Overall, during the forward pass of the graph transformer in FedGT, each node will only attend to $n_s$ sampled neighboring nodes and $n_g$ global nodes. 
Such a hybrid attention scheme reduces the complexity from $O(n^2)$ to $O(n(n_g+n_s))$. Since $n_g$ and $n_s$ are small constants, we obtain a linear computational complexity in FedGT.

\noindent\textbf{Comparison with GNN-based methods. }
Most GNNs follow a message-passing paradigm that is likely to make false predictions with altered or missing links \citep{xu2019topology}. In contrast,
the global nodes in FedGT capture the global context 
and get further augmented with personalized aggregation (Sec. \ref{sec:pa}), which can supplement the missing information. 
Sec. \ref{sec: experiments} shows that the hybrid attention in FedGT is robust to missing links and effectively alleviates the performance degradation. 

\subsection{Personalized Aggregation}
\label{sec:pa}
Data heterogeneity is quite common in subgraph FL since local subgraphs are from different parts of the whole graph. Therefore, it is suboptimal to train a single global model for all the clients. Instead, in FedGT, we propose a personalized aggregation scheme where model parameters with similar data distribution are assigned with larger weights in the weighted averaging for each client. 
However, one key question is how to measure the similarity between clients under the privacy constraint. One straightforward method is to calculate the cosine similarity between local model updates. However, the similarity measured in high-dimensional parameter space is not accurate due to the curse of dimensionality \citep{bellman1966dynamic}, and the large parameter size brings computational burdens.
Instead, here we propose to  
estimate client similarity based on global nodes because global nodes can reflect the corresponding local data distribution and have limited dimension sizes. As there are no fixed orders in the global nodes, we calculate the similarity between clients $S_{i,j}$ similar to optimal transport \citep{villani2009optimal}: 
\begin{equation}
    S_{i,j} = \mathop{\rm max}\limits_\pi \frac{1}{n_g} \sum_k^{n_g} cos(\mu_{i, k}, \mu_{j, \pi(k)}),
    \label{Hungarian}
\end{equation}
where $\pi$ is a bijective mapping: $\left[n_g\right] \rightarrow \left[n_g\right]$ and $cos(\cdot,\cdot)$ calculates the cosine similarity between two embedding vectors. 
Computing Equation. \ref{Hungarian} is based on solving a matching problem and needs the Hungarian algorithm
with $O(n_g^3)$ complexity \citep{jonker1987shortest}. 
Note that there are also some approximation algorithms \citep{cuturi2013sinkhorn, fan2017point} whose complexities are $O(n_g^2)$.
Since in FedGT, $n_g$ is fixed as a small constant (e.g., 10), we still use the Hungarian matching to calculate Equation. \ref{Hungarian} which does not introduce much overhead. 
With the similarities, we perform weighted averaging of the updated local models as:
\begin{equation}
    \hat{\theta}_i = \sum_j^M \alpha_{ij} \cdot \theta_j, ~\alpha_{ij} = \frac{{\rm exp}(\tau \cdot S_{i,j})}{\sum_k {\rm exp} (\tau \cdot S_{i,k})},
    \label{weighted averaging}
\end{equation}
where $\alpha_{ij}$ is the normalized similarity between clients $i$ and $j$, and $\hat{\theta}_i$ is the personalized weighted model parameter to send to client $i$. 
$\tau$ is a hyperparameter for scaling
the unnormalized similarity score. Increasing $\tau$ will put more weight on parameters with higher similarities. Besides the model parameters, the global nodes from each client are also aggregated with personalized weighted averaging. Specifically, the global nodes are first aligned with optimal transport and then averaged similar to Equation. \ref{weighted averaging}. In this way, the global nodes not only preserve the information of the local subgraph but can further incorporate information from other clients with similar data distributions. 

\subsection{Local Differential Privacy}
\label{sec: ldp}
Since the uploaded model parameters and the representations of global nodes may contain private information of clients, we further propose to use Local Differential Privacy (LDP) \citep{yang2020local, arachchige2019local} to protect clients' privacy. 
Formally, denote the input vector of model parameters or representations as $g$, the LDP mechanism as $\mathcal{M}$, and the clipping threshold as $\delta$, we have: $\mathcal{M}(g) = clip(g,\delta) + n$, where $n\sim Laplace(0, \lambda)$ is the Laplace noise with 0 mean and noise strength $\lambda$.
Previous works \citep{FedGNN,qi2020privacy} show that the upper bound of the privacy budget $\epsilon$ is
$\frac{2\delta}{\lambda}$
, which indicates that a smaller privacy budget $\epsilon$ can be achieved by
using a smaller clipping threshold $\delta$ or a larger noise strength $\lambda$.
However, the classification accuracy of trained models will be negatively affected if
the privacy budget is too small. Therefore, we should choose
the hyperparameters of LDP appropriately to balance the model's performance and privacy protection. 

\subsection{Theoretical Analysis of Global Attention}
\label{theoretical analysis}
In FedGT, we propose to use global nodes to approximate information in the whole subgraph. Here we analyze the approximation error theoretically. For simplicity, we omit the local attention here. For ease of expression, we first define the attention score function and the output of a self-attention layer. 

\begin{definition}
Here, let $\mathbf{H} \in \mathbb{R}^{n_i \times d}$ be the node representation of the whole subgraph, where $n_i$ denotes the number of nodes in the subgraph. $\mathbf{H}_b \in \mathbb{R}^{b \times d}$ denote a batch of input nodes for self-attention. We assume the positional
encodings are already fused into node representations for simplicity. $\mathbf{W}_{Q}, \mathbf{W}_{K},$ and $\mathbf{W}_{V} \in \mathbb{R}^{d \times d'}$ are the weight matrices. The attention score function is:
\begin{equation}
 \mathcal{A}(\mathbf{H}) \triangleq \operatorname{Softmax} (\mathbf{H}_b \mathbf{W}_{Q}(\mathbf{H}\mathbf{W}_{K})^{\top}).  
\end{equation}
The output of a self-attention layer is:
\begin{equation}
    \mathcal{O}(\mathbf{H}) \triangleq \mathcal{A}(\mathbf{H})\mathbf{H}\mathbf{W}_{V}.
\end{equation}
\label{function def}
\end{definition}
\vspace{-1em}
\begin{theorem}
Suppose the attention score function ($\mathcal{A}(\cdot)$) is Lipschitz continuous with constant $\mathcal{C}$. 
Let $\mu \in \mathbb{R}^{n_g \times d}$ denote the representations of global nodes. $P \in \mathbb{R}^{n_i \times n_g}$ is the assignment matrix to recover $\mathbf{H}$ i.e., $\mathbf{H} \approx P \mu$. Specifically, each row of $P$ is a one-hot vector indicating the global node that the node is assigned to. We assume nodes are equally distributed to the global nodes. 
Formally, we have the following inequality:
\begin{equation}
    \|\mathcal{O}(\mu)-\mathcal{O}(\mathbf{H})\|_F \leq \mathcal{C}\cdot  \sigma \cdot (2+\sigma) \|\mathbf{H}\|^2_F \cdot \|\mathbf{W}_{V}\|_F, 
\end{equation}
where $\sigma \triangleq \|\mathbf{H}- P\mu\|_F/ \|\mathbf{H}\|_F$ is the approximation error rate and $\|\cdot\|_F$ is the Frobenius Norm.
\label{thm}
\end{theorem}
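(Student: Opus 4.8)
The plan is to decompose $\mathcal{O}(\mu)-\mathcal{O}(\mathbf{H})$ into a piece coming from the difference in attention scores and a piece coming from the difference in the value-carrying matrix, then bound each separately using the Lipschitz hypothesis on $\mathcal{A}(\cdot)$ together with submultiplicativity of the Frobenius norm. Concretely, I would first rewrite $\mathcal{O}(\mu) = \mathcal{A}(\mu)\,\mu\,\mathbf{W}_V$ and $\mathcal{O}(\mathbf{H}) = \mathcal{A}(\mathbf{H})\,\mathbf{H}\,\mathbf{W}_V$, and insert the mixed term $\mathcal{A}(\mu)\,\mathbf{H}\,\mathbf{W}_V$ (or $\mathcal{A}(\mathbf{H})\,P\mu\,\mathbf{W}_V$ — the choice matters for constants). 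This gives
\begin{equation}
\mathcal{O}(\mu)-\mathcal{O}(\mathbf{H}) = \bigl(\mathcal{A}(\mu)-\mathcal{A}(\mathbf{H})\bigr)\mathbf{H}\mathbf{W}_V + \mathcal{A}(\mu)\bigl(\mu - \mathbf{H}\bigr)\mathbf{W}_V,
\end{equation}
after using $\mathbf{H}\approx P\mu$ to identify $\mu$ (more precisely $P\mu$) with the approximant of $\mathbf{H}$; the ``equally distributed'' assumption is what lets me treat $\mu$ and $P\mu$ interchangeably in the norm up to the stated error, since $P^\top P$ is then a scalar multiple of the identity and $\|P\mu\|_F$ relates cleanly to $\|\mu\|_F$.

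Next I would bound the two terms. For the second term, $\|\mathcal{A}(\mu)(P\mu-\mathbf{H})\mathbf{W}_V\|_F \le \|\mathcal{A}(\mu)\|_{\mathrm{op}}\,\|P\mu-\mathbf{H}\|_F\,\|\mathbf{W}_V\|_F$; since $\mathcal{A}(\mu)$ is row-stochastic (softmax output), its operator norm is controlled, and $\|P\mu-\mathbf{H}\|_F = \sigma\|\mathbf{H}\|_F$ by definition of $\sigma$. For the first term, the Lipschitz assumption gives $\|\mathcal{A}(\mu)-\mathcal{A}(\mathbf{H})\|_F \le \mathcal{C}\,\|\text{(argument difference)}\|$ — here I need to be careful that the argument of $\mathcal{A}$ is $\mathbf{H}_b\mathbf{W}_Q(\cdot\,\mathbf{W}_K)^\top$, so the relevant perturbation is in the second slot, i.e.\ $\mathbf{H}\mathbf{W}_K$ versus $\mu\mathbf{W}_K$, and the difference picks up a factor of $\|\mathbf{H}_b\mathbf{W}_Q\|_F$ or $\|\mathbf{H}\mathbf{W}_Q\|_F$. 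Folding the weight-matrix norms and $\|\mathbf{H}\|_F$ factors together, the first term contributes something of order $\mathcal{C}\,\sigma\,\|\mathbf{H}\|_F^2\,\|\mathbf{W}_V\|_F$ and the cross/quadratic terms in $\sigma$ (from iterating the approximation, e.g.\ bounding $\|\mathbf{H}\|_F$ on the approximant side by $(1+\sigma)\|\mathbf{H}\|_F$ or similar) produce the $\sigma^2$ contribution. Collecting gives the claimed $\mathcal{C}\cdot\sigma\cdot(2+\sigma)\,\|\mathbf{H}\|_F^2\,\|\mathbf{W}_V\|_F$.

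The main obstacle I anticipate is bookkeeping the exact constants and, relatedly, pinning down precisely which norm of which weight matrix ($\mathbf{W}_Q$, $\mathbf{W}_K$) gets absorbed into the bound — the statement as written only has $\|\mathbf{H}\|_F^2$ and $\|\mathbf{W}_V\|_F$ on the right, so the author must be implicitly assuming normalized (or bounded-norm) $\mathbf{W}_Q,\mathbf{W}_K$, or folding those norms into the Lipschitz constant $\mathcal{C}$, or exploiting that the softmax argument's scale is benign. A secondary subtlety is justifying the interplay between $\mu$ (shape $n_g\times d$) and $P\mu$ (shape $n_i\times d$) in the norm inequalities: one has to decide whether $\mathcal{O}(\mu)$ is literally $\mathcal{A}(\mu)\mu\mathbf{W}_V \in \mathbb{R}^{b\times d'}$ or $\mathcal{A}(P\mu)(P\mu)\mathbf{W}_V$, and the ``equally distributed'' hypothesis is exactly the lever that makes these agree or be comparable. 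Once those conventions are fixed, the rest is a routine triangle-inequality-plus-submultiplicativity estimate.
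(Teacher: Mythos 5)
Your overall skeleton matches the paper's proof: insert a mixed term, apply the triangle inequality, use the Lipschitz hypothesis to bound the attention-score difference by $\mathcal{C}\,\sigma\|\mathbf{H}\|_F$, bound the approximant's norm by $(1+\sigma)\|\mathbf{H}\|_F$, and collect to get the $(2+\sigma)$ factor. Your guess that $\|\mathbf{W}_Q\|,\|\mathbf{W}_K\|$ are folded into $\mathcal{C}$ is also exactly what the paper does: Lipschitz continuity is assumed for $\mathcal{A}(\cdot)$ as a function of its matrix argument directly, so no separate query/key norms appear.

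However, there is one genuine gap in how you pass between $\mu$ and $P\mu$. The paper does \emph{not} treat them as ``interchangeable in norm up to the stated error''; it first proves the exact identity $\mathcal{O}(\mu)=\mathcal{O}(P\mu)$ and then runs the entire triangle-inequality argument on $\mathcal{O}(P\mu)-\mathcal{O}(\mathbf{H})$, where both attention matrices have $n_i$ columns and all comparisons are between objects of the same shape. The identity follows from the softmax structure: $\operatorname{Softmax}\bigl(\mathbf{H}_b\mathbf{W}_Q(P\mu\mathbf{W}_K)^{\top}\bigr)P\mu = \operatorname{Softmax}\bigl(\mathbf{H}_b\mathbf{W}_Q(\mu\mathbf{W}_K)^{\top}+\log(\bm{1}_{n_i}P)\bigr)\mu$, i.e.\ collapsing the $n_i$ duplicated key columns into $n_g$ distinct ones introduces an additive $\log$ of the column counts of $P$; the equal-distribution assumption makes $\bm{1}_{n_i}P=(n_i/n_g)\bm{1}_{n_g}$, so this additive term is constant and cancels inside the softmax. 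Your proposed substitute --- using $P^{\top}P=(n_i/n_g)I$ to relate $\|P\mu\|_F$ to $\|\mu\|_F$ --- controls only norms, not the outputs themselves, and would in any case introduce a spurious $\sqrt{n_i/n_g}$ factor that does not appear in the claimed bound. A second, smaller discrepancy: for the term $\|\mathcal{A}(\mathbf{H})\|_F\,\|P\mu-\mathbf{H}\|_F$ the paper bounds $\|\mathcal{A}(\mathbf{H})\|_F\le\mathcal{C}\|\mathbf{H}\|_F$ via the Lipschitz hypothesis (implicitly taking $\mathcal{A}(0)=0$), which is what produces the ``$2$'' in $(2+\sigma)$ with the stated $\mathcal{C}\|\mathbf{H}\|_F^2$ scaling; your row-stochasticity bound on the operator norm of $\mathcal{A}(\mu)$ is sound but yields a constant of a different form and would not reproduce the theorem's right-hand side as written.
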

\begin{proof}
Our idea is to bound the difference between the original and the approximate output with the approximation error rate and Lipschitz constant. Appendix \ref{app:proof} shows our detailed proof. 
\end{proof}
Theorem. \ref{thm} indicates that we can obtain the results of self-attention with a bounded error by only computing attention with $n_g$ global nodes. Moreover, the error could be minimized if the global nodes well capture the node distributions of the whole subgraph (i.e., smaller $\sigma$).  
\section{Experiments}
\label{sec: experiments}
\subsection{Experimental Settings}
\label{experiment settings}
\noindent\textbf{Datasets.} Following previous works~\citep{FedSage, baek2023personalized}, we construct the distributed subgraphs from benchmark datasets by partitioning them into the number of clients, each of which has a subgraph that is part of the original graph. Specifically, we use six datasets in experiments: Cora, CiteSeer, Pubmed, and ogbn-arxiv are citation graphs~\citep{planetoid, ogb}; Amazon-Computer and Amazon-Photo are product graphs~\citep{amazon, amnazonsubset}. We use the METIS~\citep{Karypis95metis} as the default graph partitioning algorithm, which can specify the number of subgraphs. Specifically, METIS firstly coarsens the original graph into coarsened graphs with maximal matching methods \citep{karypis1998fast}. It then computes a minimum edge-cut partition on the coarsened graph.
Finally, the subgraphs are obtained by projecting the partitioned coarsened graph back to the original graph. We also consider Louvain partitioning \citep{blondel2008fast} in Appendix \ref{app:more results}.

Inspired by real-world applications, we further consider two subgraph settings.
In the \textbf{1) non-overlapping setting}, there is no overlapped nodes between subgraphs. 
For example, for a period, the local resident-location relationship graph is only stored in one city's database. 
We directly use the outputs from the METIS since it provides the non-overlapping partitioned subgraphs. 
We consider 5/10/20 clients (subgraphs) in this setting.
In the \textbf{2) overlapping setting}, there are overlapped nodes between subgraphs. 
For example, the same custormer may have multiple accounts at different banks.
We randomly sample the subgraphs multiple times from the partitioned graph. Specifically, we first divide the original graph into 2/6/10
disjoint subgraphs with METIS for 10/30/50 clients settings. After that, for each partitioned subgraph from METIS, we randomly sample half of the nodes and the associated edges as one subgraph for 5 times. Therefore, the generated subgraphs belonging to the same METIS partition have overlapped nodes and form cluster structures.
Note that due to the different subgraph sampling manners, the numbers of clients are different  in the two settings.
\begin{table*}[t]
\caption{\small Node classification results of different methods in the \textbf{non-overlapping} setting. We report the means and standard deviations over three different runs ($\%$). The best results are bolded.}
\label{tab:main:nonoverlap}
\small
\centering
\resizebox{\textwidth}{!}{
\renewcommand{\arraystretch}{0.8}
\begin{tabular}{lcccccccccc}
\toprule
& \multicolumn{3}{c}{\bf Cora} & \multicolumn{3}{c}{\bf CiteSeer} & \multicolumn{3}{c}{\bf Pubmed} & \textbf{All}  \\
\cmidrule(l{2pt}r{2pt}){2-4} \cmidrule(l{2pt}r{2pt}){5-7} \cmidrule(l{2pt}r{2pt}){8-10} \cmidrule(l{2pt}r{2pt}){11-11}
\textbf{Methods} & \textbf{5 Clients} & \textbf{10 Clients} & \textbf{20 Clients} & \textbf{5 Clients} & \textbf{10 Clients} & \textbf{20 Clients} & \textbf{5 Clients} & \textbf{10 Clients} & \textbf{20 Clients} & \textbf{Avg.}  \\
\midrule

Local   & 80.10 $\pm$ 0.76 & 77.43 $\pm$ 0.49 & 72.75 $\pm$ 0.89 & 70.10 $\pm$ 0.25 & 68.77 $\pm$ 0.35 & 64.51 $\pm$ 0.28 & 85.30 $\pm$ 0.24 & 84.88 $\pm$ 0.32 & 82.66 $\pm$ 0.65 & - \\

\midrule

FedAvg  & 79.63 $\pm$ 4.37 & 72.06 $\pm$ 2.18 & 69.50 $\pm$ 3.58 & 70.24 $\pm$ 0.47 & 68.32 $\pm$ 2.59 & 65.12 $\pm$ 2.15 & 84.87 $\pm$ 0.41 & 78.92 $\pm$ 0.39 & 78.21 $\pm$ 0.25 & - \\
FedPer  & 81.33 $\pm$ 1.24 & 78.76 $\pm$ 0.25 & 78.24 $\pm$ 0.36 & 70.36 $\pm$ 0.34 & 70.31 $\pm$ 0.36 & 66.95 $\pm$ 0.46 & 85.88 $\pm$ 0.25 & 85.62 $\pm$ 0.23 & 84.90 $\pm$ 0.37 & - \\
GCFL+    & 80.36 $\pm$ 0.57 & 78.37 $\pm$ 0.89 & 77.19 $\pm$ 1.30 & 70.52 $\pm$ 0.64 & 69.71 $\pm$ 0.79 & 66.80 $\pm$ 0.95 & 85.77 $\pm$ 0.38 & 84.94 $\pm$ 0.35 & 84.10 $\pm$ 0.43 & - \\
FedSage+ & 80.09 $\pm$ 1.28 & 74.07 $\pm$ 1.46 & 72.68 $\pm$ 0.95 & 70.94 $\pm$ 0.21 & 69.03 $\pm$ 0.59 & 65.20 $\pm$ 0.73 & 86.03 $\pm$ 0.28 & 82.89 $\pm$ 0.37 & 79.71 $\pm$ 0.35 & - \\ 
FED-PUB  & 83.72 $\pm$ 0.18 & 81.45 $\pm$ 0.12 & 81.10 $\pm$ 0.64 & 72.40 $\pm$ 0.26 & 71.83 $\pm$ 0.61 & 66.89 $\pm$ 0.14 & 86.81 $\pm$ 0.12 & 86.09 $\pm$ 0.17 &84.66 $\pm$ 0.54 & - \\
Gophormer & 80.20 $\pm$ 3.66 & 74.34 $\pm$ 2.46 & 72.05 $\pm$ 3.41 & 71.22 $\pm$ 0.47 & 69.24 $\pm$ 0.60 & 65.91 $\pm$ 1.17 & 86.23 $\pm$ 0.42 & 82.31 $\pm$ 0.46 & 80.44 $\pm$ 0.57 & - \\
GraphGPS  & 81.46 $\pm$ 0.70 & 75.12 $\pm$ 1.85 & 73.63 $\pm$ 4.19 & 71.19 $\pm$ 0.84 & 69.54 $\pm$ 0.70 & 65.19 $\pm$ 1.26 & 86.39 $\pm$ 0.30 & 83.40 $\pm$ 0.57 & 80.93 $\pm$ 0.52 & - \\
\midrule

FedGT (Ours)    & \textbf{84.41} $\pm$ 0.45 & \textbf{81.49} $\pm$ 0.41 & \textbf{81.25} $\pm$ 0.58 & \textbf{72.95} $\pm$ 0.83 & \textbf{71.98} $\pm$ 0.70 & \textbf{69.60} $\pm$ 0.45 & \textbf{87.21} $\pm$ 0.14 & \textbf{86.65} $\pm$ 0.15 & \textbf{85.79} $\pm$ 0.28 & - \\

\midrule
\midrule

& \multicolumn{3}{c}{\bf Amazon-Computer} & \multicolumn{3}{c}{\bf Amazon-Photo} & \multicolumn{3}{c}{\bf ogbn-arxiv} & \textbf{All} \\
\cmidrule(l{2pt}r{2pt}){2-4} \cmidrule(l{2pt}r{2pt}){5-7} \cmidrule(l{2pt}r{2pt}){8-10} \cmidrule(l{2pt}r{2pt}){11-11}
\textbf{Methods} & \textbf{5 Clients} & \textbf{10 Clients} & \textbf{20 Clients} & \textbf{5 Clients} & \textbf{10 Clients} & \textbf{20 Clients} & \textbf{5 Clients} & \textbf{10 Clients} & \textbf{20 Clients} & \textbf{Avg.} \\
\midrule

Local   & 89.18 $\pm$ 0.15 & 88.25 $\pm$ 0.21 & 84.34 $\pm$ 0.28 & 91.85 $\pm$ 0.12 & 89.56 $\pm$ 0.09 & 85.83 $\pm$ 0.17 & 66.87 $\pm$ 0.09 & 66.03 $\pm$ 0.14 & 65.43 $\pm$ 0.21 & 78.55\\

\midrule

FedAvg  & 88.03 $\pm$ 1.68 & 81.82 $\pm$ 1.71 & 78.19 $\pm$ 0.86 & 89.26 $\pm$ 1.80 & 85.31 $\pm$ 1.67 & 82.59 $\pm$ 1.18 & 66.24 $\pm$ 0.45 & 64.09 $\pm$ 0.83 & 62.47 $\pm$ 1.19 & 75.82\\
FedPer  & 88.94 $\pm$ 0.25 & 88.26 $\pm$ 0.17 & 87.85 $\pm$ 0.29 & 91.30 $\pm$ 0.33 & 89.97 $\pm$ 0.27 & 88.30 $\pm$ 0.18 & 67.02 $\pm$ 0.19 & 66.02 $\pm$ 0.27 & 65.25 $\pm$ 0.31 & 79.66\\

GCFL+    & 89.07 $\pm$ 0.45 & 88.74 $\pm$ 0.49 & 87.81 $\pm$ 0.36 & 90.78 $\pm$ 0.69 & 90.22 $\pm$ 0.85 & 89.23 $\pm$ 1.07 & 66.97 $\pm$ 0.11 & 66.38 $\pm$ 0.14 & 65.30 $\pm$ 0.34 & 79.63\\

FedSage+ & 89.78 $\pm$ 0.71 & 84.39 $\pm$ 1.06 & 79.75 $\pm$ 0.90 & 90.89 $\pm$ 0.44 & 86.82 $\pm$ 0.78 & 83.10 $\pm$ 0.70 & 66.91 $\pm$ 0.12 & 65.30 $\pm$ 0.13 & 62.63 $\pm$ 0.24 & 77.23\\ 
FED-PUB   & 90.25 $\pm$ 0.07 &89.73 $\pm$ 0.16 & 88.20 $\pm$ 0.18 & 93.20 $\pm$ 0.15 & 92.46 $\pm$ 0.19 & 90.59 $\pm$ 0.35 & 67.62 $\pm$ 0.11 & 66.35 $\pm$ 0.16 & 63.90 $\pm$ 0.27 & 80.96\\

Gophormer & 88.41 $\pm$ 1.21 & 83.10 $\pm$ 0.79 & 80.33 $\pm$ 0.84 & 91.34 $\pm$ 0.28 & 86.05 $\pm$ 0.51 & 83.62 $\pm$ 0.30 & 67.31 $\pm$ 0.24 & 66.32 $\pm$ 0.35 & 62.15 $\pm$ 0.32 & 77.25\\
GraphGPS  & 89.12 $\pm$ 0.23 & 84.53 $\pm$ 0.28 & 81.80 $\pm$ 0.17 & 91.45 $\pm$ 0.70 & 87.43 $\pm$ 1.06 & 83.32 $\pm$ 1.42 & 67.58 $\pm$ 0.19 & 66.15 $\pm$ 0.28 & 62.90 $\pm$ 0.33 & 77.84\\

\midrule

FedGT (Ours)    & \textbf{90.78} $\pm$ 0.08 & \textbf{90.59} $\pm$ 0.09 & \textbf{90.22} $\pm$ 0.14 & \textbf{93.48} $\pm$ 0.18 & \textbf{93.17} $\pm$ 0.24 & \textbf{92.20} $\pm$ 0.16 & \textbf{68.15} $\pm$ 0.06 & \textbf{67.79} $\pm$ 0.11 & \textbf{67.53} $\pm$ 0.15 & \textbf{81.96}\\

\bottomrule
\end{tabular}}
\vskip -0.1in
\end{table*}

For dataset splitting, 20$\%$/40$\%$/40$\%$ nodes from each subgraph are randomly sampled for training, validation, and testing except for the ogbn-arxiv dataset.
This is because the ogbn-arxiv dataset has a relatively
larger number of nodes as shown in Table .\ref{dataset statistics}. Therefore, we randomly sample 5$\%$ nodes for training, the
remaining half of the nodes for validation, and the other nodes for testing.

\noindent\textbf{Baselines.}
FedGT is compared with popular FL frameworks (FedAvg~\citep{McMahan2017CommunicationEfficientLO}, FedPer~\citep{arivazhagan2019federated}), FGL methods (GCFL+~\citep{GCFL}, FedSage+~\citep{FedSage}, FED-PUB~\citep{baek2023personalized}), and graph transformers (Gophormer~\citep{zhao2021gophormer}, GraphGPS~\citep{rampavsek2022recipe}) extended to the subgraph FL setting for comprehensive evaluations. 
We also train FedGT locally without sharing model parameters for reference (Local). More baseline details are in Appendix. \ref{app:exp settings}.

\noindent\textbf{Implementation Details.} For the classifier model in FedAvg, FedPer, GCFL+, and FedSage+, we use GraphSage \citep{GraphSAGE} with two layers and the mean aggregator following previous work \citep{FedSage}. The number of nodes sampled in each layer of GraphSage is 5. The first layer is regarded as the base layer in FedPer. For FED-PUB, we use two layers of GCN \citep{kipf2016semi} following the original paper \citep{baek2023personalized}.
We use Gophormer and GraphGPS with 2 layers and 4 attention heads as their backbone models and extend them to FL with the FedAvg framework. More implementation details of FedGT and FL training are shown in Appendix.\ref{app:exp settings}.

\begin{table*}[t]
\caption{\small Node classification results of different methods in the \textbf{overlapping} setting. We report the means and standard deviations over three different runs ($\%$). The best results are bolded.}
\label{tab:main:overlap}
\small
\centering
\resizebox{\textwidth}{!}{
\renewcommand{\arraystretch}{0.8}
\begin{tabular}{lcccccccccc}
\toprule
& \multicolumn{3}{c}{\bf Cora} & \multicolumn{3}{c}{\bf CiteSeer} & \multicolumn{3}{c}{\bf Pubmed} & \textbf{All} \\
\cmidrule(l{2pt}r{2pt}){2-4} \cmidrule(l{2pt}r{2pt}){5-7} \cmidrule(l{2pt}r{2pt}){8-10} \cmidrule(l{2pt}r{2pt}){11-11}
\textbf{Methods} & \textbf{10 Clients} & \textbf{30 Clients} & \textbf{50 Clients} & \textbf{10 Clients} & \textbf{30 Clients} & \textbf{50 Clients} & \textbf{10 Clients} & \textbf{30 Clients} & \textbf{50 Clients} & \textbf{Avg.} \\
\midrule

Local   & 78.14 $\pm$ 0.15 & 73.60 $\pm$ 0.18 & 69.87 $\pm$ 0.40 & 68.94 $\pm$ 0.29 & 66.13 $\pm$ 0.49 & 63.70 $\pm$ 0.92 & 84.90 $\pm$ 0.05 & 83.27 $\pm$ 0.24 & 80.88 $\pm$ 0.19 & - \\

\midrule

FedAvg  & 78.55 $\pm$ 0.49 & 69.56 $\pm$ 0.79 & 65.19 $\pm$ 3.88 & 68.73 $\pm$ 0.46 & 65.02 $\pm$ 0.59 & 63.85 $\pm$ 1.31 & 84.66 $\pm$ 0.11 & 80.62 $\pm$ 0.46 & 80.18 $\pm$ 0.50 & - \\
FedPer  & 78.84 $\pm$ 0.32 & 73.46 $\pm$ 0.43 & 72.54 $\pm$ 0.52 & 70.42 $\pm$ 0.26 & 65.09 $\pm$ 0.48 & 64.04 $\pm$ 0.46 & 85.76 $\pm$ 0.14 & 83.45 $\pm$ 0.15 & 81.90 $\pm$ 0.23 & - \\
GCFL+    & 78.60 $\pm$ 0.25 & 73.41 $\pm$ 0.36 & 73.13 $\pm$ 0.87 & 69.80 $\pm$ 0.34 & 65.17 $\pm$ 0.32 & 64.71 $\pm$ 0.67 & 85.08 $\pm$ 0.21 & 83.77 $\pm$ 0.17 & 80.95 $\pm$ 0.22 & - \\
FedSage+ & 79.01 $\pm$ 0.30 & 72.20 $\pm$ 0.76 & 66.52 $\pm$ 1.37 & 70.09 $\pm$ 0.26 & 66.71 $\pm$ 0.18 & 64.89 $\pm$ 0.25 & 86.07 $\pm$ 0.06 & 83.26 $\pm$ 0.08 & 80.48 $\pm$ 0.20 & - \\ 
FED-PUB   & 79.65 $\pm$ 0.17 & 75.42 $\pm$ 0.48 & 73.13 $\pm$ 0.29 & 70.43 $\pm$ 0.27 & 67.41 $\pm$ 0.36 & 65.13 $\pm$ 0.40 & 85.60 $\pm$ 0.10 & 85.19 $\pm$ 0.15 & 84.26 $\pm$ 0.19 & - \\

Gophormer  & 78.74 $\pm$ 0.42 & 73.41 $\pm$ 0.77 & 68.30 $\pm$ 0.46 & 69.53 $\pm$ 0.21 & 65.89 $\pm$ 0.45 & 63.15 $\pm$ 0.63 & 85.47 $\pm$ 0.06 & 82.14 $\pm$ 0.25 & 80.85	$\pm$ 0.13 & - \\
GraphGPS & 79.40 $\pm$ 0.46 & 75.42 $\pm$ 0.75 & 69.07 $\pm$ 2.16 & 69.95 $\pm$ 0.30 & 65.77 $\pm$ 0.47 & 62.54 $\pm$ 0.59 & 85.49 $\pm$ 0.17 & 82.73 $\pm$ 0.16 & 80.50 $\pm$ 0.33 & - \\

\midrule

FedGT (Ours)    & \textbf{81.73} $\pm$ 0.26 & \textbf{77.94} $\pm$ 0.56 & \textbf{76.20} $\pm$ 0.39 & \textbf{72.40} $\pm$ 0.45 & \textbf{68.86} $\pm$ 0.33 & \textbf{67.71} $\pm$ 0.67 & \textbf{86.90} $\pm$ 0.14 & \textbf{86.15} $\pm$ 0.18 & \textbf{84.95} $\pm$ 0.17 & - \\

\midrule
\midrule

& \multicolumn{3}{c}{\bf Amazon-Computer} & \multicolumn{3}{c}{\bf Amazon-Photo} & \multicolumn{3}{c}{\bf ogbn-arxiv} & \textbf{All} \\
\cmidrule(l{2pt}r{2pt}){2-4} \cmidrule(l{2pt}r{2pt}){5-7} \cmidrule(l{2pt}r{2pt}){8-10} \cmidrule(l{2pt}r{2pt}){11-11}
\textbf{Methods} & \textbf{10 Clients} & \textbf{30 Clients} & \textbf{50 Clients} & \textbf{10 Clients} & \textbf{30 Clients} & \textbf{50 Clients} & \textbf{10 Clients} & \textbf{30 Clients} & \textbf{50 Clients} & \textbf{Avg.} \\
\midrule

Local   & 89.17 $\pm$ 0.09 & 85.77 $\pm$ 0.13 & 81.40 $\pm$ 0.05 & 91.65 $\pm$ 0.15 & 86.20 $\pm$ 0.16 & 84.71 $\pm$ 0.09 & 66.95  $\pm$ 0.05 & 64.62 $\pm$ 0.07 & 62.89 $\pm$ 0.06 & 76.82 \\

\midrule

FedAvg  & 87.65 $\pm$ 0.28 & 77.56 $\pm$ 0.71 & 76.41 $\pm$ 0.95 & 89.90 $\pm$ 0.06 & 81.42 $\pm$ 0.15 & 76.98 $\pm$ 0.68 & 66.86 $\pm$ 0.04 & 62.15 $\pm$ 0.12 & 60.81 $\pm$ 0.27 & 74.23\\
FedPer  & 89.52 $\pm$ 0.05 & 86.79 $\pm$ 0.16 & 85.64 $\pm$ 0.19 & 90.23 $\pm$ 0.24 & 90.05 $\pm$ 0.19 & 88.37 $\pm$ 0.18 & 67.21 $\pm$ 0.08 & 65.00 $\pm$ 0.37 & 62.19 $\pm$ 0.46 & 77.76 \\
GCFL+   & 88.65 $\pm$ 0.13 & 86.52 $\pm$ 0.12 & 84.30 $\pm$ 0.25 & 91.42 $\pm$ 0.14 & 90.12 $\pm$ 0.15 & 88.67 $\pm$ 0.11 & 67.10 $\pm$ 0.08 & 64.33 $\pm$ 0.17 & 62.87 $\pm$ 0.10 & 77.70\\

FedSage+ & 88.61 $\pm$ 0.18 & 80.24 $\pm$ 0.30 & 78.92 $\pm$ 0.27 & 90.26 $\pm$ 0.45 & 82.57 $\pm$ 0.34 & 78.52 $\pm$ 0.20 & 67.38 $\pm$ 0.13 & 64.89 $\pm$ 0.09 & 62.28 $\pm$ 0.14 & 75.72 \\ 
FED-PUB  & 89.94 $\pm$ 0.09 & 89.10 $\pm$ 0.07 & 88.34 $\pm$ 0.15 & 92.78 $\pm$ 0.06 & 91.14 $\pm$ 0.09 & 90.45 $\pm$ 0.17 & 64.20 $\pm$ 0.08 & 62.97 $\pm$ 0.14 & 61.85 $\pm$ 0.15 & 78.72 \\

Gophormer & 89.03 $\pm$ 0.10 & 82.89 $\pm$ 0.48 & 80.45 $\pm$ 0.76 & 91.74 $\pm$ 0.08 & 84.20 $\pm$ 0.74 & 79.16 $\pm$ 1.56 & 67.42 $\pm$ 0.06 & 64.98 $\pm$ 0.30 & 62.55 $\pm$ 0.23 & 76.11 \\
GraphGPS  & 89.55 $\pm$ 0.21 & 83.97 $\pm$ 0.26 & 81.06 $\pm$ 0.47 & 91.97 $\pm$ 0.10 & 84.11 $\pm$ 0.27 & 81.59 $\pm$ 1.85 & 67.27 $\pm$ 0.21 & 63.96 $\pm$ 0.09 & 62.71 $\pm$ 0.15 & 76.50\\

\midrule

FedGT (Ours)    & \textbf{90.76} $\pm$ 0.10 & \textbf{89.98} $\pm$ 0.15 & \textbf{89.04} $\pm$ 0.12 & \textbf{93.19} $\pm$ 0.15 & \textbf{92.03} $\pm$ 0.14 & \textbf{91.37} $\pm$ 0.20 & \textbf{68.78} $\pm$ 0.13 & \textbf{67.92} $\pm$ 0.11 & \textbf{65.78} $\pm$ 0.26 & \textbf{80.63} \\

\bottomrule

\end{tabular}
}
\vskip -0.1in
\end{table*}

\subsection{Experimental Results.}
\noindent\textbf{Main Results.} We show the average node classification results in non-overlapping and overlapping settings in Table~\ref{tab:main:nonoverlap} and \ref{tab:main:overlap}. Generally, FedGT consistently overperforms all the baseline methods in both settings. Specifically, we have the following observations and insights: \textbf{1)} All model performance deteriorates when the number of clients increases. This is because the size of local data decreases and the number of missing links increases (see Appendix \ref{app:missing links}) with a larger number of clients.  
The data heterogeneity issue also becomes more severe (see Appendix \ref{app:heterogeneity}). It is therefore more challenging to train local models and collaboratively learn generalizable models with other clients. 
\textbf{2)} Generally, the non-overlapping setting is more challenging than overlapping with the same number of clients. This is mainly because subgraphs in the non-overlapping setting are completely disjoint and more heterogeneous. Moreover, the non-overlapping setting has less number of nodes due to the experimental designs.
\textbf{3)} The methods based on graph transformers (i.e., Gophormer and GraphGPS) are indeed powerful and can achieve competitive node classification results in FGL. For instance, GraphGPS achieves 81.46 $\%$ on the Cora dataset, and with 5 clients (the highest accuracy besides FedGT). However, they all fail to consider the heterogeneity of clients and can hardly work in settings with a larger number of clients (e.g., GraphGPS drops to 73.63 $\%$ with 20 clients). The same circumstance occurs to FedSage+ and FedAvg.  
\textbf{4)} FedGT leverages the advantage of a hybrid graph attention framework and personalized aggregation and can achieve consistent improvement over baselines. For example, the classification accuracy of FedGT only decreases from 90.78$\%$ to 90.22$\%$ when the number of clients increases from 5 to 20 on Amazon-Computer (the number of missing links increases by 54,878 and the data heterogeneity increases to 0.759), indicating its robustness to missing links and data heterogeneity. Moreover, FedGT also overperforms its variant that only trains models locally (Local), which shows the ability of FedGT to effectively leverage distributed subgraph data for joint performance improvement. 
Finally, Figure~\ref{convergence1} and \ref{convergence2} in the Appendix \ref{app:accuracy curves} show the convergence plots. FedGT can converge more rapidly than the other baselines.
\begin{figure*}[t]
	\centering
	\subfigure[Label similarity]{\includegraphics[width=0.22\linewidth]{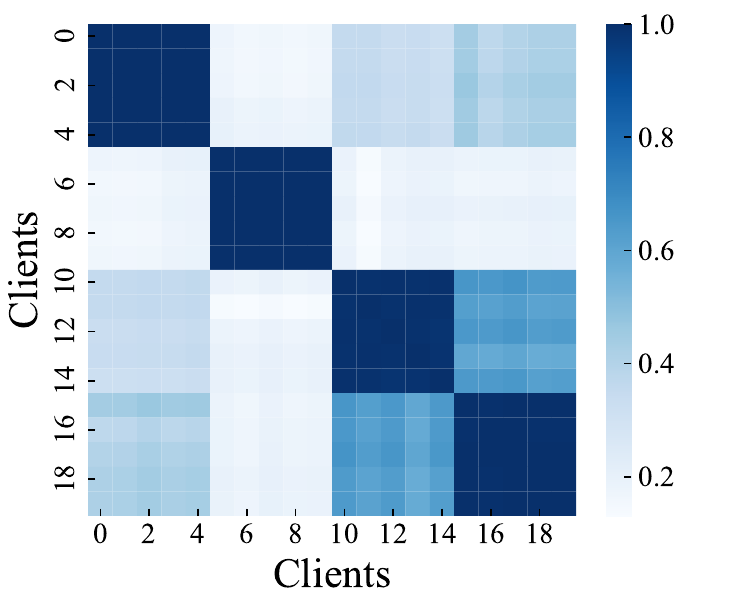}}
 \subfigure[FedGT]{\includegraphics[width=0.22\linewidth]{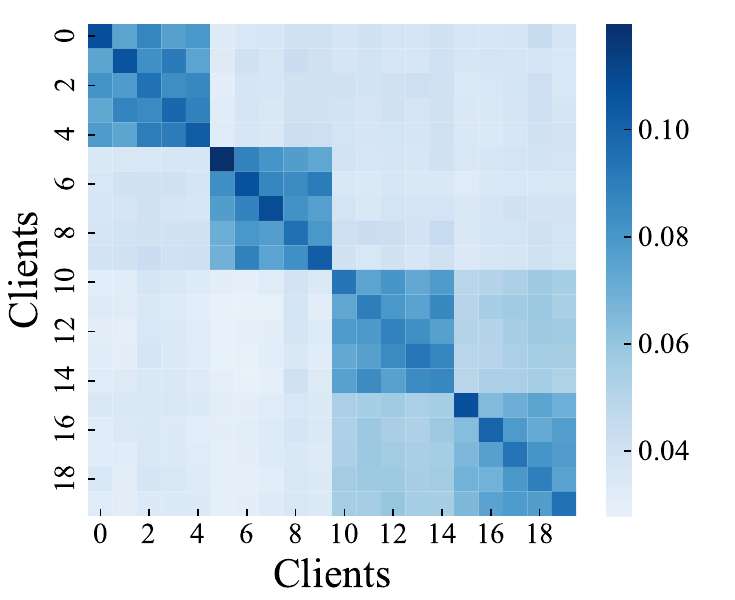}}
 \subfigure[FedGT w/o OT]{\includegraphics[width=0.22\linewidth]{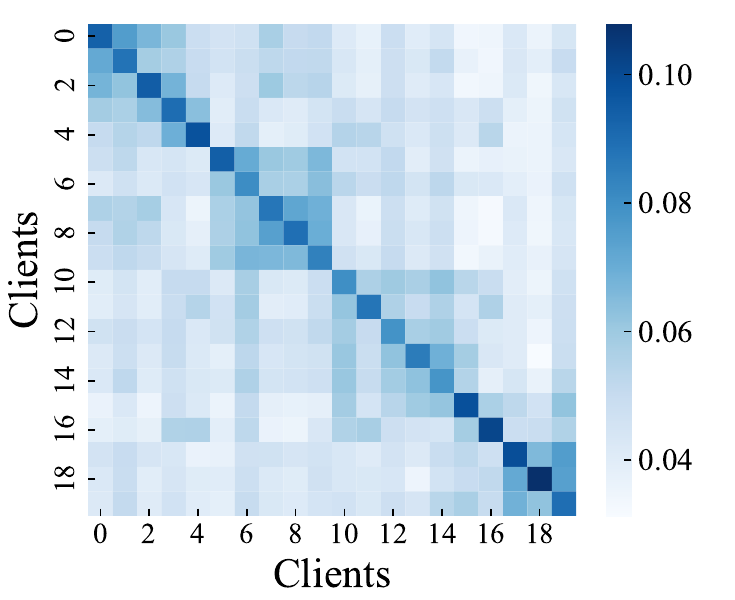}}
    \subfigure[Local update sim.]
    {\includegraphics[width=0.22\linewidth]{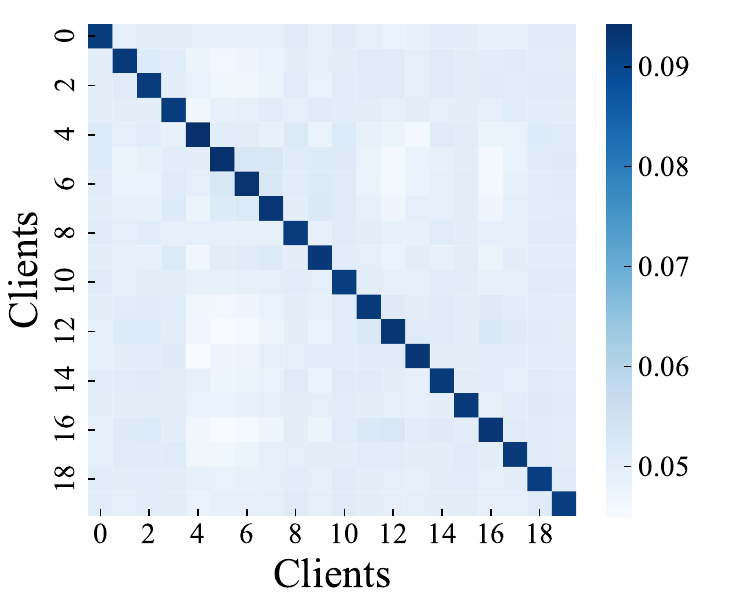}}
	\caption{Similarity Heatmaps in the overlapping setting on Cora. (a) measures the cosine similarity of label distributions. (b) and (c) shows the normalized similarity in FedGT and FedGT w/o optimal transport (OT); (d) shows the normalized cosine similarity of local model updates at round 30.}
     \vspace{-1em}
 \label{heatmap}
\end{figure*}

\begin{figure*}[t]
	\centering
	\subfigure[$L$]{\includegraphics[width=0.161\linewidth]{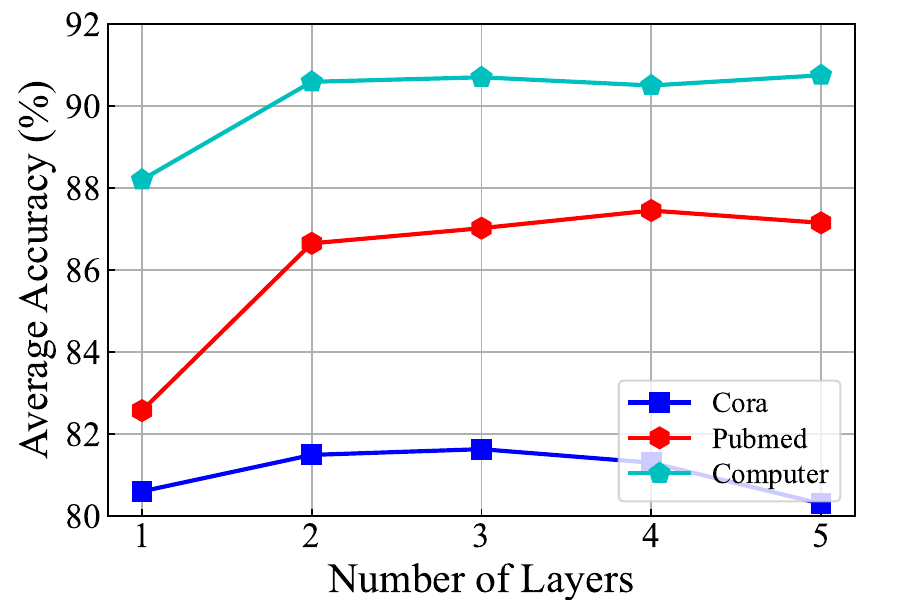}}
    \subfigure[$n_g$]{\includegraphics[width=0.161\linewidth]{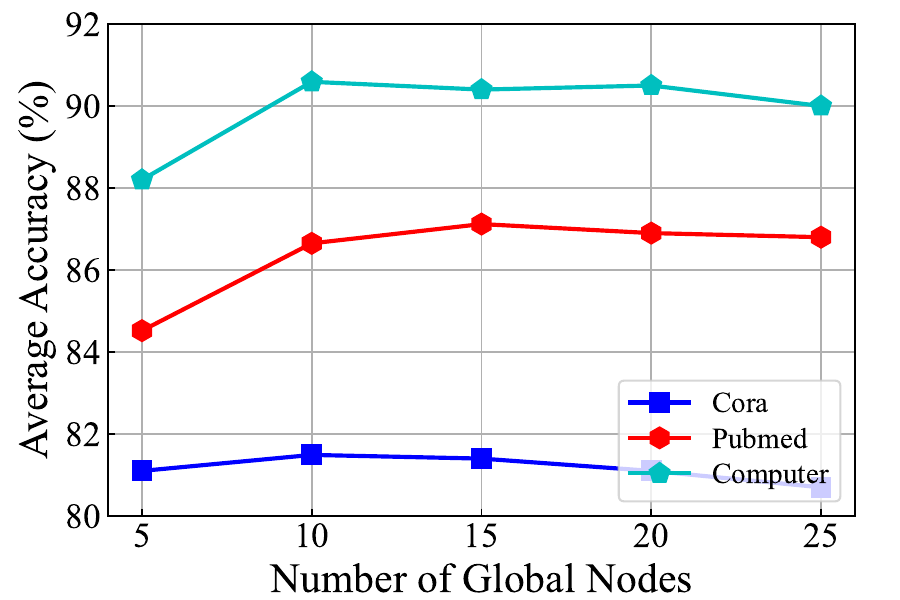}}
    \subfigure[$n_s$]{\includegraphics[width=0.161\linewidth]{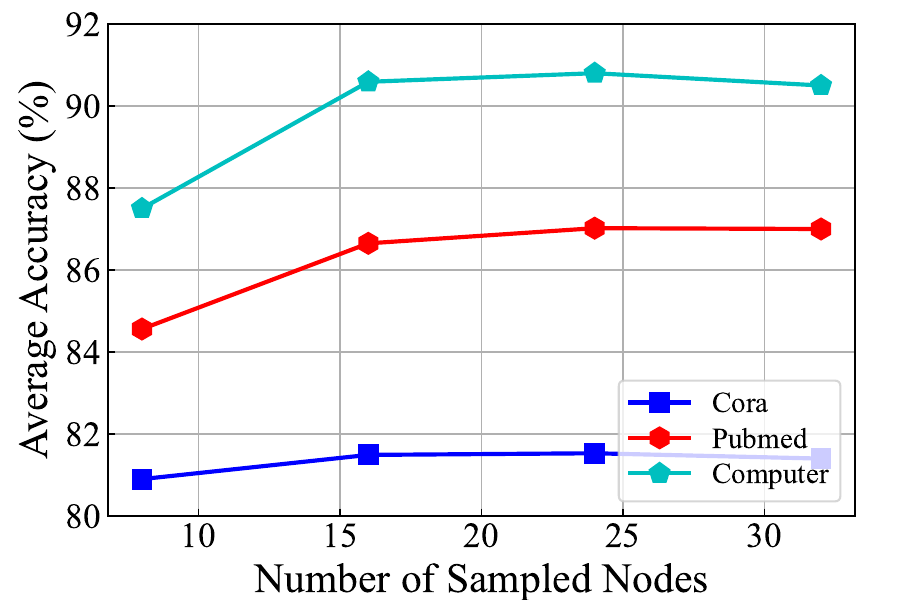}}
    \subfigure[$\tau$]{\includegraphics[width=0.161\linewidth]{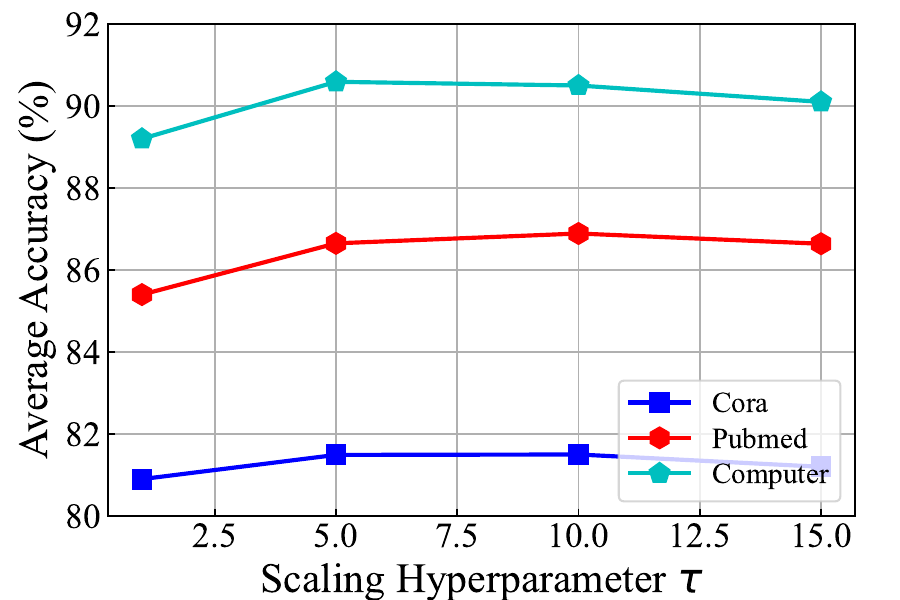}}
    \subfigure[$\lambda$]{\includegraphics[width=0.161\linewidth]{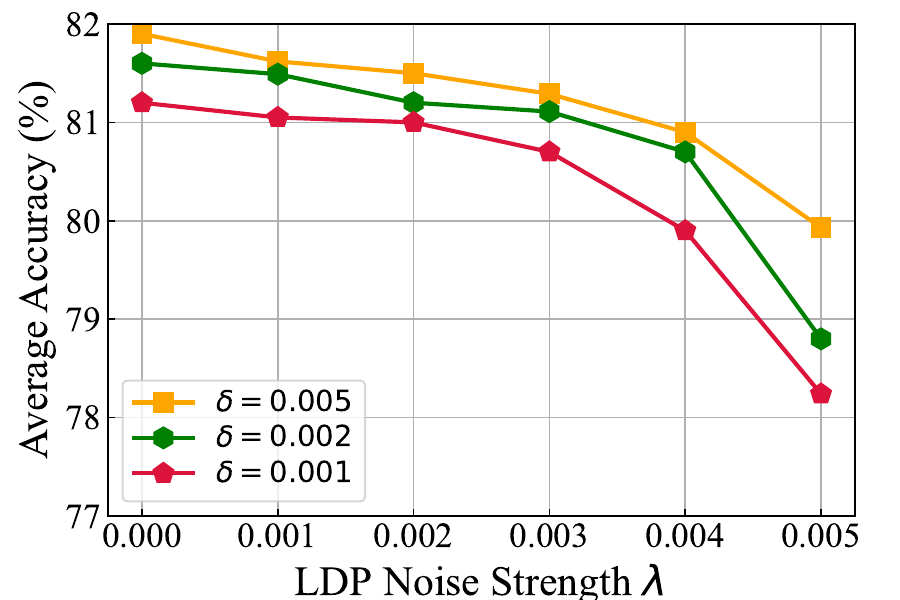}}
    \subfigure[$\lambda$]{\includegraphics[width=0.161\linewidth]{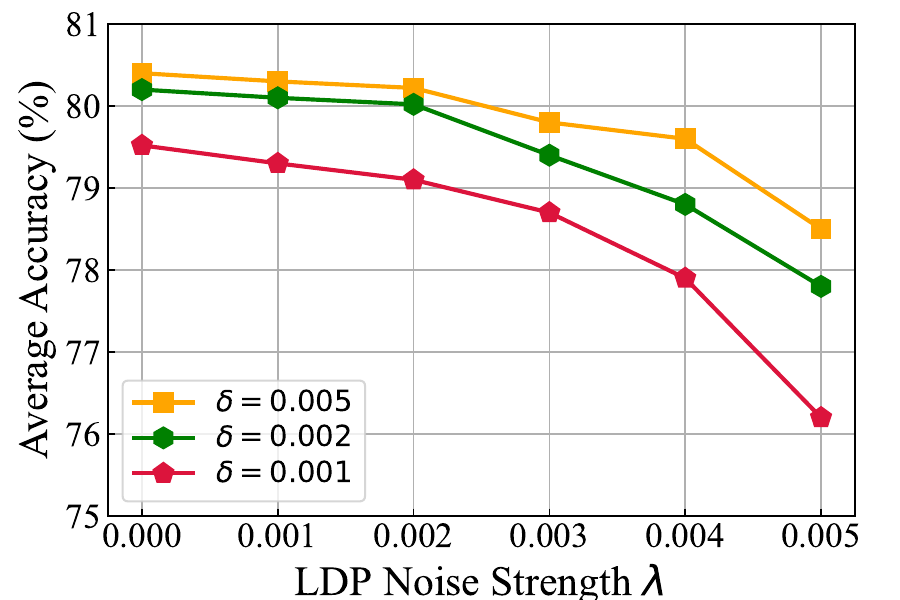}}
    \label{fig:convergence:nonoverlap}
    \vspace{-1em}
	\caption{Hyperparameter analysis in the non-overlapping setting (10 clients) on Cora. (a), (b), (c), and (d) show the influence of the number of layers $L$, global nodes $n_g$, sampled nodes $n_s$, and scaling hyperparameter $\tau$. (e) and (f) explore the influence of $\delta$ and $\lambda$ in LDP. We apply LDP only to the uploaded global nodes (e) or both the global nodes and local model updates (f).}
 \vspace{-1em}
 \label{hyperparameter}
\end{figure*}

\noindent\textbf{Effectiveness of Similarity Estimation.}
Here we show whether FedGT can accurately identify clients with similar data distributions in personalized aggregation. We regard two clients with similar data distributions if they have similar label distributions, i.e., higher cosine similarity between label distribution vectors. In Figure \ref{heatmap}(a), we show the heatmaps of pairwise label distribution similarity (cosine similarity) over 20 clients in the overlapping setting. There are clearly four clusters by the interval of five in the diagonal and the last two form a larger cluster.
In Figure \ref{heatmap}(b) and (c), we show the normalized similarity (i.e., $\alpha_{ij}$ in Equation. \ref{weighted averaging}) of FedGT and FedGT w/o optimal transport where we directly calculate cosine similarity without aligning. We observe that Figure \ref{heatmap}(b) can well capture the four clusters and identify the less-obvious larger cluster, verifying the effectiveness of our similarity estimation scheme based on the global nodes. In contrast, the cluster structure in Figure \ref{heatmap}(c) becomes blurred and noisy, showing the necessity of optimal transport for aligning global nodes.
In Figure \ref{heatmap}(d), we further show the normalized cosine similarity of local model updates, which can not reflect any cluster structures and each client only has high similarity to itself. We conjecture it is due to the high dimension of the parameter space and the randomness of stochastic gradient descent. 
Moreover, using label distribution similarity may lead to additional privacy risks (see Appendix \ref{app:label distribution}).

\noindent\textbf{Hyperparameter Analysis.} We vary several important hyperparameters in Figure \ref{hyperparameter} to show their influence on FedGT. Generally, FedGT is robust to the choice of hyperparameters. In Figure \ref{hyperparameter}(a), we observe that the performance increases at the beginning with the increase of $L$ due to the improved model capability. However, the performance slightly decreases when $L$ becomes 4 or 5, possibly suffering from over-fitting. Figure \ref{hyperparameter}(b) and (c) show the influence of $n_g$ and $n_s$. 
Generally, FedGT has better performance with larger $n_g$ and $n_s$. 
Figure \ref{hyperparameter}(d) indicates that choosing an appropriate $\tau$ can balance the weight of local model and other clients' models and help improve the performance.

\noindent\textbf{Trade-off between Privacy and Accuracy. }
In Figure \ref{hyperparameter}(e) and (f), we explore the influence of $\delta$ and $\lambda$ in LDP. The influence on privacy budget $\epsilon$ is shown in Appendix \ref{app:privacy budget}.
Generally, larger $\lambda$ values and smaller $\delta$ values result in better privacy protection, while also leading to a larger performance decline, especially when LDP is applied on both local model updates and global nodes.
Therefore, we need to select $\lambda$ and $\delta$ appropriately to achieve a balance between privacy protection and model performance.

\noindent\textbf{Ablation Studies and Complexity Analysis. } The ablations studies demonstrating the effectiveness of proposed modules in FedGT and more complexity analysis are shown in Appendix.\ref{app:more results}.

\section{Conclusion}
To tackle the challenges in subgraph FL (missing links and data heterogeneity), a novel scalable Federated Graph Transformer (FedGT) is proposed in this paper. The linear-complexity hybrid attention scheme enables FedGT to capture both local and global information and be robust to missing links. 
Personalized aggregation based on the similarity of global nodes from different clients is leveraged to deal with data heterogeneity.
Local differential privacy is further used to protect data privacy.
Finally, experiments on various subgraph FL settings show the superiority of FedGT. We discuss the limitations and future works in the Appendix. \ref{app:limitations}.


\bibliographystyle{iclr2024_conference}
\bibliography{main}

\appendix
\setcounter{theorem}{0}

\newpage
\section{Algorithms of FedGT}
\label{app:algorithm}
In Algorithm. \ref{algo:fedgt_client} and \ref{algo:fedgt_server}, we show the pseudo-codes for the clients and server in FedGT.
\begin{algorithm}[h]
\caption{\textbf{FedGT} Client Algorithm}
\begin{algorithmic}[1]
    \label{algo:fedgt_client}
    \STATE  \textbf{Input}: number of local epochs $E$, learning rate $\eta$, local model $\mathcal{F}_i$, loss function $\mathcal{L}_i$ \\
    \STATE   \textbf{Function} InitClient()
        \STATE Calculate PPR matrix and Positional Encoding. 
\vspace{0.05in}
    \STATE   \textbf{Function} RunClient(${\hat{\theta}}_{i}, \hat{\mu}_i$)
    \STATE $\theta_i \leftarrow \hat{\theta}_{i}, \mu_i \leftarrow \hat{\mu}_i$.
        \FOR{each local epoch $e$ from $1$ to $E$}
            \STATE ${\theta}_i \leftarrow {\theta}_i-\eta\nabla\mathcal{L}_i(\mathcal{F}_i(\theta_i))$.
            \STATE Update global node $\mu_i$ with Algorithm. 1.
        \ENDFOR
        \STATE $\Delta \theta_i = \theta_i - \hat{\theta}_{i}$; apply LDP to $\Delta \theta_i, \mu_i$.
        \STATE \textbf{return} ${\Delta\theta}_i, \mu_i$
\end{algorithmic}
\end{algorithm}

\begin{algorithm}[h]
\caption{\textbf{FedGT} Server Algorithm}
\begin{algorithmic}[1]
\label{algo:fedgt_server}
\STATE \textbf{Input}: Total rounds $R$, number of clients $M$, scaling factor $\tau$. \\
    \STATE   \textbf{Function} RunServer()
        \STATE initialize $\hat{\theta}^{(1)}$ and $\hat{\mu}^{(1)}$
        \FOR{$r = 1, 2, \cdots, R$}
            \FOR{${\rm client}~i \in \{1, 2, \cdots, M\}~\textbf{\mbox{in parallel}}$}
                \IF{$r = 1$} 
                \STATE \text{InitClient}().
                    \STATE $\Delta{\theta}_i, {\mu}_i^{(r+1)} \leftarrow \text{RunClient} (\hat{\theta}^{(r)}, \hat{\mu}^{(r)})$.
                    \STATE ${\theta}_i^{(r)} \leftarrow \hat{\theta}^{(r)} + \Delta{\theta}_i$.
                \ELSE
                \STATE Calculate $S_{i,j}$ with Equation. \ref{Hungarian}.
                \STATE ${\hat{\theta}}_{i}^{(r)} \leftarrow \sum_{j=0}^M \frac{\text{exp}({\tau \cdot S_{i,j}})}{\sum_{k=0}^M \text{exp}(\tau \cdot S_{i,k})} {\theta}_j^{(r)}$.
                \STATE Obtain $\hat{\mu}_i^{(r)}$ with aligning and weighted averaging.
                \STATE $\Delta{\theta}_i, \mu_i^{(r+1)} \leftarrow \text{RunClient} ({\hat{\theta}}_{i}^{(r)}, \hat{\mu}_i^{(r)})$.
                \STATE ${\theta}_i^{(r+1)} \leftarrow {\hat{\theta}}_{i}^{(r+1)} + \Delta{\theta}_i$
                \ENDIF 
            \ENDFOR
        \ENDFOR 
    
\end{algorithmic}
\end{algorithm}
\section{More Discussions of Challenges in Subgraph FL}
\label{app:challenges}
Subgraph federated learning has brought unique challenges. Firstly, 
different from federated learning in other domains such as CV and NLP, where data samples of images and texts are isolated and independent, nodes in graphs are connected and correlated. Therefore, there are potentially missing links/edges between subgraphs that are not captured by any client, leading to information loss or bias. This becomes even worse when GNNs are used as graph mining models because most GNNs follow a message-passing scheme that aggregates node embeddings along edges \citep{gilmer2017neural, hamilton2017inductive}. For example, Figure \ref{motivation}(a) shows an illustration of missing links between subgraphs.  Figure \ref{motivation}(b) shows the constructed rooted tree of the example center node by a 2-layer GNN on the local subgraph and the global graph. In the message-passing process, the node embeddings are aggregated bottom-to-up and finally the embedding of the center node is used for its node label prediction. Due to the missing link issue, the rooted tree on the local graph is biased and GNN is prone to make a false prediction (dark green instead of blue) in such a situation.

\begin{figure}[t]
    \centering
    \includegraphics[width=0.90\linewidth]{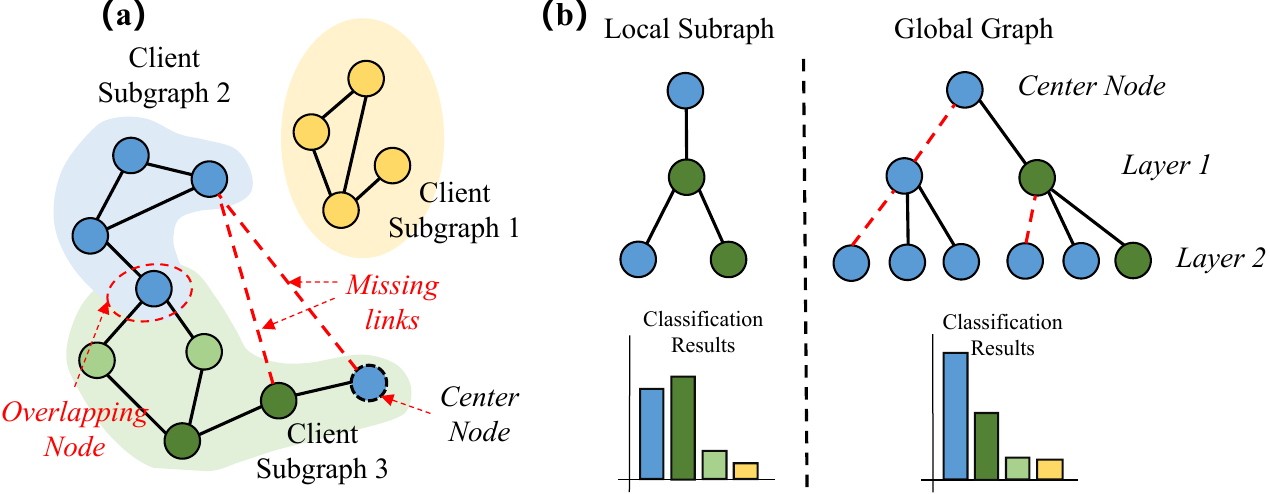}
    \caption{\textbf{(a) An illustration of subgraph federated Learning.} There are three clients (subgraphs) and the color of each node indicates its label. Two main challenges are missing links and data heterogeneity. \textbf{(b) The rooted tree on the local subgraph is biased} due to the missing links, and the GNN is prone to make a false prediction based on the local subgraph.}
    \label{motivation}
    \vspace{-1em}
\end{figure}

\section{Proof of Theorem 1}
\label{app:proof}

\begin{theorem}
Suppose the attention score function ($\mathcal{A}(\cdot)$) is Lipschitz continuous with constant $\mathcal{C}$. 
Let $\mu \in \mathbb{R}^{n_g \times d}$ denote the representations of global nodes. $P \in \mathbb{R}^{n_i \times n_g}$ is the assignment matrix to recover $\mathbf{H}$ i.e., $\mathbf{H} \approx P \mu$. Specifically, each row of $P$ is a one-hot vector indicating the global node that the node is assigned to. We assume nodes are equally distributed to the global nodes. 
Formally, we have the following inequality:
\begin{equation}
    \|\mathcal{O}(\mu)-\mathcal{O}(\mathbf{H})\|_F \leq \mathcal{C}\cdot  \sigma \cdot (2+\sigma) \|\mathbf{H}\|^2_F \cdot \|\mathbf{W}_{V}\|_F, 
\end{equation}
where $\sigma \triangleq \|\mathbf{H}- P\mu\|_F/ \|\mathbf{H}\|_F$ is the approximation error rate and $\|\cdot\|_F$ is the Frobenius Norm of matrices.
\label{thm}
\end{theorem}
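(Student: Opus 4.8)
The plan is to bound $\|\mathcal{O}(\mu) - \mathcal{O}(\mathbf{H})\|_F$ by decomposing the difference into two pieces: one coming from the perturbation of the attention weights $\mathcal{A}$, and one coming from replacing the value matrix $\mathbf{H}\mathbf{W}_V$ by $P\mu \mathbf{W}_V$ (or rather by the reduced form $\mu\mathbf{W}_V$). First I would make the bookkeeping precise: since $\mathcal{O}(\mathbf{H}) = \mathcal{A}(\mathbf{H})\mathbf{H}\mathbf{W}_V$ while $\mathcal{O}(\mu) = \mathcal{A}(\mu)\mu\mathbf{W}_V$, and since $\mathbf{H} \approx P\mu$ with the rows of $P$ one-hot and an equal-distribution assumption, I expect that $\mathcal{A}(\mu)\mu$ relates to $\mathcal{A}(P\mu)(P\mu)$ in a controlled way (the equal-distribution assumption should make $P$ behave like a scaled isometry in the relevant norm, so that $\|P\mu\|_F$ and $\|\mu\|_F$ differ by a known factor and the softmax rows are consistent under the lift). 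Writing $\widehat{\mathbf{H}} \triangleq P\mu$, I would insert $\mathcal{A}(\widehat{\mathbf{H}})\widehat{\mathbf{H}}\mathbf{W}_V$ as an intermediate term and use the triangle inequality.

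The first term, $\|(\mathcal{A}(\mathbf{H}) - \mathcal{A}(\widehat{\mathbf{H}}))\mathbf{H}\mathbf{W}_V\|_F$, I would control by submultiplicativity of the Frobenius norm together with the Lipschitz hypothesis: $\|\mathcal{A}(\mathbf{H}) - \mathcal{A}(\widehat{\mathbf{H}})\|_F \le \mathcal{C}\|\mathbf{H} - \widehat{\mathbf{H}}\|_F = \mathcal{C}\,\sigma\,\|\mathbf{H}\|_F$, which then gets multiplied by $\|\mathbf{H}\|_F\|\mathbf{W}_V\|_F$, producing the leading $\mathcal{C}\,\sigma\,\|\mathbf{H}\|_F^2\,\|\mathbf{W}_V\|_F$ contribution. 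The second term, $\|\mathcal{A}(\widehat{\mathbf{H}})(\mathbf{H} - \widehat{\mathbf{H}})\mathbf{W}_V\|_F$, uses that $\mathcal{A}(\widehat{\mathbf{H}})$ is a row-stochastic matrix and hence has operator norm bounded by something small (at most $1$ in operator norm for row-stochastic matrices acting appropriately, or absorbable into constants), so this term is bounded by roughly $\|\mathbf{H} - \widehat{\mathbf{H}}\|_F\|\mathbf{W}_V\|_F = \sigma\|\mathbf{H}\|_F\|\mathbf{W}_V\|_F$. To match the claimed bound with its $\|\mathbf{H}\|_F^2$ factor, I would either invoke a normalization in which $\|\mathbf{H}\|_F \ge 1$ (so $\sigma\|\mathbf{H}\|_F \le \sigma\|\mathbf{H}\|_F^2$) or track the constant through the softmax more carefully; the $\sigma^2$ term presumably arises from a cross term when one does not simply drop $\widehat{\mathbf{H}}$ for $\mathbf{H}$ but instead expands $\|\widehat{\mathbf{H}}\|_F \le \|\mathbf{H}\|_F + \|\mathbf{H} - \widehat{\mathbf{H}}\|_F = (1+\sigma)\|\mathbf{H}\|_F$, squared or multiplied into one of the terms. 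Summing the $(1+\sigma)$ from one term and $1$ from another, together with the product structure, should assemble into the factor $\sigma(2+\sigma)$.

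The main obstacle I anticipate is the careful handling of the lift/projection between $\mu \in \mathbb{R}^{n_g \times d}$ and $P\mu \in \mathbb{R}^{n_i \times d}$: the two self-attention outputs live in different-dimensional spaces ($n_g$ vs. $n_i$ rows), so the statement $\|\mathcal{O}(\mu) - \mathcal{O}(\mathbf{H})\|_F$ is only meaningful once one agrees how to compare them — presumably by lifting $\mathcal{O}(\mu)$ through $P$ as well, i.e., comparing $P\,\mathcal{O}(\mu)$ with $\mathcal{O}(\mathbf{H})$, and here the equal-distribution assumption on $P$ is doing real work to ensure $\|P\,X\|_F$ is comparable to $\|X\|_F$ up to the factor $\sqrt{n_i/n_g}$ which must either cancel or be absorbed. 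I would also need to verify that $\mathcal{A}(P\mu)$ and (a suitable lift of) $\mathcal{A}(\mu)$ coincide or differ negligibly, which again should follow from the one-hot/equal-distribution structure since duplicating rows of the key/query matrix by $P$ replicates the softmax pattern block-wise. Once that dimensional matching is pinned down, the rest is the routine triangle-inequality-plus-submultiplicativity estimate sketched above.
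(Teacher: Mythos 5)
Your overall skeleton matches the paper's: replace $\mathcal{O}(\mu)$ by $\mathcal{O}(P\mu)$, pull out $\|\mathbf{W}_V\|_F$ by submultiplicativity, split $\mathcal{A}(\mathbf{H})\mathbf{H}-\mathcal{A}(P\mu)P\mu$ by the triangle inequality into an ``attention perturbation'' piece and a ``value perturbation'' piece, and control the first with the Lipschitz hypothesis. Your particular splitting, $(\mathcal{A}(\mathbf{H})-\mathcal{A}(P\mu))\mathbf{H}+\mathcal{A}(P\mu)(\mathbf{H}-P\mu)$, is a mirror image of the paper's $(\mathcal{A}(P\mu)-\mathcal{A}(\mathbf{H}))P\mu+\mathcal{A}(\mathbf{H})(P\mu-\mathbf{H})$ and would work equally well. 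However, there is a concrete gap in how you bound the second piece. You propose to use that $\mathcal{A}(P\mu)$ is row-stochastic, hence has norm $O(1)$, giving a contribution of order $\sigma\|\mathbf{H}\|_F\|\mathbf{W}_V\|_F$ --- which has the wrong homogeneity in $\|\mathbf{H}\|_F$ and no factor of $\mathcal{C}$, and your proposed patches (assume $\|\mathbf{H}\|_F\ge 1$, or ``track the constant more carefully'') do not close this. The paper instead bounds $\|\mathcal{A}(\cdot)\|_F\le\mathcal{C}\|\cdot\|_F$ directly from the Lipschitz hypothesis (implicitly taking $\mathcal{A}(0)=0$). If you do the same in your decomposition, i.e.\ $\|\mathcal{A}(P\mu)\|_F\le\mathcal{C}\|P\mu\|_F\le\mathcal{C}(1+\sigma)\|\mathbf{H}\|_F$, your two pieces give $\mathcal{C}\sigma\|\mathbf{H}\|_F^2+\mathcal{C}\sigma(1+\sigma)\|\mathbf{H}\|_F^2=\mathcal{C}\sigma(2+\sigma)\|\mathbf{H}\|_F^2$, exactly the claimed constant. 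Without that step your argument does not assemble into the stated bound.

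The dimension-mismatch obstacle you anticipate is not actually present, and your proposed fix (comparing $P\,\mathcal{O}(\mu)$ with $\mathcal{O}(\mathbf{H})$) is not what is needed. In Definition~\ref{function def} the queries are always the fixed batch $\mathbf{H}_b$; only the keys and values are taken from the argument of $\mathcal{A}$ and $\mathcal{O}$. Hence $\mathcal{O}(\mu)$ and $\mathcal{O}(\mathbf{H})$ both have $b$ rows and are directly comparable. The only fact you need --- and the only place the one-hot and equal-distribution assumptions enter --- is the exact identity $\mathcal{O}(P\mu)=\mathcal{O}(\mu)$: duplicating each key/value row of $\mu$ exactly $n_i/n_g$ times multiplies each softmax numerator by the same count, which cancels in the normalization, so attending to $P\mu$ equals attending to $\mu$. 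You gesture at this (``replicates the softmax pattern block-wise'') but describe it as duplicating rows of the ``key/query matrix'' and as holding only ``negligibly''; it is an exact equality on the key/value side, and pinning it down is the one genuinely non-routine step of the proof.
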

\begin{proof}
We first show that $\mathcal{O}(P\mu)= \mathcal{O}(\mu)$ as below:
\begin{align}
    &\mathcal{O}(P\mu) =\mathcal{A}(P\mu)P\mu\mathbf{W}_{V}\\ &=
    \operatorname{Softmax} (\mathbf{H}_b \mathbf{W}_{Q} (P\mu\mathbf{W}_{K})^{\top})P\mu\mathbf{W}_{V}\\ &=
    \operatorname{Softmax} (\mathbf{H}_b \mathbf{W}_{Q}(\mu\mathbf{W}_{K})^{\top} +{\rm log}(\bm{1}_{n_i} P))\mu\mathbf{W}_{V}\\ &=
    \operatorname{Softmax} (\mathbf{H}_b \mathbf{W}_{Q}(\mu\mathbf{W}_{K})^{\top})\mu\mathbf{W}_{V}\\ &=
    \mathcal{O}(\mu)
    \label{lemma}
\end{align}
where the first two equations are based on the definitions of $\mathcal{A}(\cdot)$ and $\mathcal{O}(\cdot)$. The third equation is based on the calculation mechanism of Softmax. The fourth equation uses the assumption that each global node is assigned with the same number of nodes i.e., $\bm{1}_{n_i} P = n_i/n_g\cdot \bm{1}_{n_g}$. Here, $\bm{1}$ denotes a vector with 1 in each entry and the specific subscript denotes its dimension. Moreover, the above equation holds because the self-attention layer is, by definition, a permutation-invariant operator.
Then we have:
\begin{align}
    &\|\mathcal{O}(\mu)-\mathcal{O}(\mathbf{H})\|_F = \|\mathcal{O}(P\mu)-\mathcal{O}(\mathbf{H})\|_F\\
    &=\|\mathcal{A}(P\mu)P\mu\mathbf{W}_{V} - \mathcal{A}(\mathbf{H})\mathbf{H}\mathbf{W}_{V}\|_F\\ &\leq
    \|\mathcal{A}(P\mu)P\mu - \mathcal{A}(\mathbf{H})\mathbf{H}\|_F\|\mathbf{W}_{V}\|_F
    \label{a}
\end{align}
For the left term of the last line, with triangle inequality, we have:
\begin{align}
    &\|\mathcal{A}(P\mu)P\mu - \mathcal{A}(\mathbf{H})\mathbf{H}\|_F\\ &=
    \|\mathcal{A}(P\mu)P\mu - \mathcal{A}(\mathbf{H})P\mu +
    \mathcal{A}(\mathbf{H})P\mu-
    \mathcal{A}(\mathbf{H})\mathbf{H}\|_F \\ &\leq
    \|\mathcal{A}(P\mu)P\mu - \mathcal{A}(\mathbf{H})P\mu\|_F +
    \|\mathcal{A}(\mathbf{H})P\mu-
    \mathcal{A}(\mathbf{H})\mathbf{H}\|_F \\ &\leq
    \|\mathcal{A}(P\mu) - \mathcal{A}(\mathbf{H})\|_F \|P\mu\|_F + \|\mathcal{A}(\mathbf{H})\|_F
    \|P\mu - \mathbf{H}\|_F
\end{align}
Based on the assumption that the attention score function ($\mathcal{A}(\cdot)$) is Lipschitz continuous with constant $\mathcal{C}$, we have:
\begin{equation}
    \|\mathcal{A}(P\mu) - \mathcal{A}(\mathbf{H})\|_F\leq
    \mathcal{C} \|P\mu - \mathbf{H}\|_F \leq
    \mathcal{C} \cdot \sigma \|\mathbf{H}\|_F.
\end{equation}
Moreover, we have:
\begin{equation}
    \|P\mu\|_F = \|P\mu - \mathbf{H} + \mathbf{H}\|_F \leq \|P\mu - \mathbf{H}\|_F + \|\mathbf{H}\|_F \leq (1+\sigma) \|\mathbf{H}\|_F,
\end{equation}
which is based on the triangle inequality and the definition of approximation error rate. $\|\mathcal{A}(\mathbf{H})\|_F$ can also be bounded by $\mathcal{C} \|\mathbf{H}\|_F$ with the Lipschitz continuity assumption. 
Finally, we have:
\begin{align}
    &\|\mathcal{O}(\mu)-\mathcal{O}(\mathbf{H})\|_F \\&\leq (\|\mathcal{A}(P\mu) - \mathcal{A}(\mathbf{H})\|_F \|P\mu\|_F + \|\mathcal{A}(\mathbf{H})\|_F
    \|P\mu - \mathbf{H}\|_F) \|\mathbf{W}_{V}\|_F \\&\leq \mathcal{C}\cdot  \sigma \cdot (2+\sigma) \|\textbf{H}\|^2_F \cdot \|\mathbf{W}_{V}\|_F.
\end{align}
\end{proof}

\section{More Details of Experimental Settings}
\label{app:exp settings}
\subsection{Dataset Statistics}
\begin{table}[h]
\centering
\caption{Dataset statistics}
\begin{tabular}{ccccc}
\toprule
 Datasets        & Nodes & Edges & Classes & Features \\ \midrule
Cora     & 2,708  & 5,429  & 7       & 1,433     \\ 
Citeseer & 3,327  & 4,732  & 6       & 3,703     \\ 
PubMed & 19,717  &  44,324  & 3       & 500     \\ 
Amazon-Computer &  13,752  &  491,722  & 10       & 767     \\
Amazon-Photo & 7,650  & 238,162  & 8       & 745     \\
ogbn-arxiv & 169,343  & 2,315,598  & 40       & 128     \\
\bottomrule
\end{tabular}
\label{dataset statistics}
\label{data}
\end{table}

\subsection{Baselines}

\begin{itemize}
    \item \textbf{FedAvg}~\citep{McMahan2017CommunicationEfficientLO} aggregates the uploaded models with respect to the number of training samples.
    \item \textbf{FedPer}~\citep{arivazhagan2019federated} is a personalized FL baseline, which only shares
     the base layers while training personalized classification layers locally for each client.
    \item \textbf{GCFL+}~\citep{GCFL} is a graph-level clustered FL framework to deal with data heterogeneity. 
    tasks. Specifically, it iteratively bi-partitions clients into two disjoint client groups based on their gradient similarities. Then, the model weights are only shared between grouped clients having similar gradients to deal with data heterogeneity. Note that the clustering scheme in our FedGT is quite different from GCFL+: In FedGT, we use online clustering to update the representations of global nodes in each client; the server of GCFL+ uses bi-partitioning to cluster clients.
    GCFL+ was originally designed for graph classification and we adapt it to our setting here. 
    \item \textbf{FedSage+}~\citep{FedSage} trains a missing neighbor generator along with the classifier to deal with missing links. Each client first receives node representations from other clients and then calculates the corresponding gradients. The gradients are sent back to other clients to train the graph generators.
    \item \textbf{FED-PUB}\citep{baek2023personalized} is personalized subgraph-level FL baseline. FED-PUB utilizes functional embeddings of the local GNNs
    using random graphs as inputs to compute similarities between them and use the similarities to perform weighted averaging for server-side aggregation. Furthermore, it learns a personalized sparse mask at each client to select and update only the subgraph-relevant subset of the aggregated parameters.
    \item \textbf{Gophormer}~\citep{zhao2021gophormer} is a scalable graph transformer that samples nodes for attention with GraphSage\citep{GraphSAGE}. We extend it to the subgraph FL setting with FedAvg.
    \item \textbf{GraphGPS}~\citep{rampavsek2022recipe} is a powerful graph transformer framework. We use GIN\citep{xu2018powerful}, Performer \citep{choromanski2020rethinking}, and Laplacian positional encoding as the corresponding modules.
    We extend it to the subgraph FL setting with FedAvg.
\end{itemize}
\subsection{Implementation Details}
For the classifier model in FedAvg, FedPer, GCFL+, and FedSage+, we use GraphSage \citep{GraphSAGE} with two layers and the mean aggregator following previous work \citep{FedSage}. The number of nodes sampled in each layer of GraphSage is 5. The first layer is regarded as the base layer in FedPer. For FED-PUB, we use two layers of Graph Convolutional Network (GCN) \citep{kipf2016semi} following the original paper \citep{baek2023personalized}.
We use Gophormer and GraphGPS with 2 layers and 4 attention heads as their backbone models and extend them to FL with the FedAvg framework. 
For the other hyperparameter settings of baselines, we use the default settings in their original papers. 
The code of FedGT will be made public upon paper acceptance. 

In our FedGT, we set the scaling hyperparameter $\tau$ as 5. $n_s$ and $n_g$ are set as 16 and 10 respectively. The  momentum hyperparameter $\gamma$ in Algorithm. \ref{global node update} is set as 0.9. The number of layers $L$ is set as 2 and the number of attention heads is set as 4. The dimension of Laplacian position encoding is 8 in the default setting. 
Since LDP is not applied to the local model updates in all the other baselines.
In the default setting, we apply LDP with $\delta$ = 0.002 and $\lambda$ = 0.001 only to the uploaded global nodes for a fair comparison. 

In FL training, considering the dataset size, we perform 100 communication rounds with 1 local training epoch for Cora, Citeseer, and PubMed. We set the number of communication rounds and local training epochs as 200 and 2 respectively for the other datasets. For all models, the hidden dimension is 128. We use a batch size of 64,
and an Adam \citep{kingma2014adam} optimizer with a learning rate of 0.001 and weight decay $5e-4$ for local training. All clients participate in the FL training in each round. 
For evaluation, 
we calculate the node classification accuracy on subgraphs at the client side and then average all clients over three different runs with random seeds. The test accuracy for all the models is reported at their best validation epoch.
We implement all experiments with
Python (3.9.13) and Pytorch (1.12.1) on an NVIDIA Tesla V100 GPU.
\section{More Experimental Results}
\label{app:more results}
Here we show more experimental results and data statistics. 
\subsection{Testing accuracy curves}
\label{app:accuracy curves}
Figure~\ref{convergence1} and \ref{convergence2} show the convergence plots. FedGT can converge more rapidly than the other baselines.
\begin{figure*}[t]
	\centering
	\subfigure[Cora]{\includegraphics[width=0.161\linewidth]{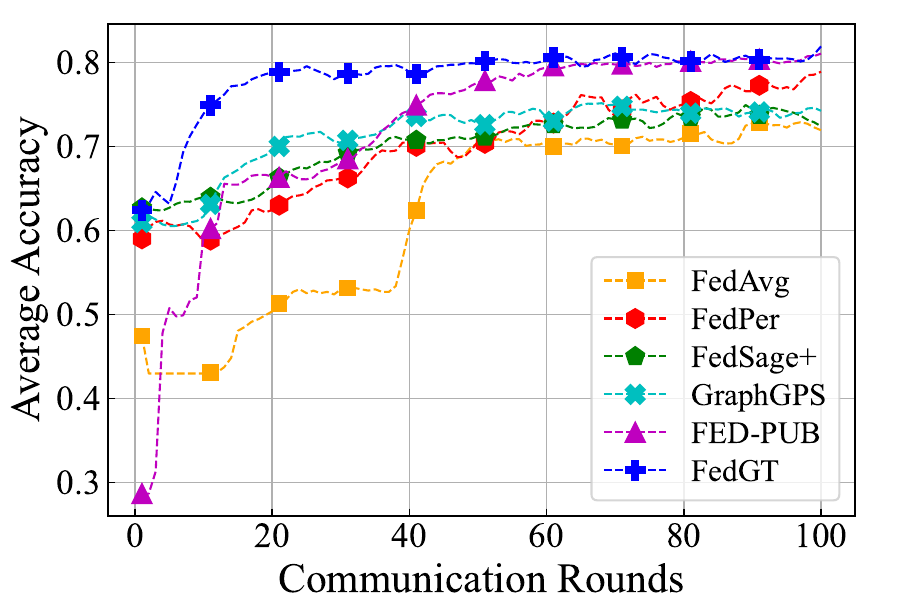}}
    \subfigure[Citeseer]{\includegraphics[width=0.161\linewidth]{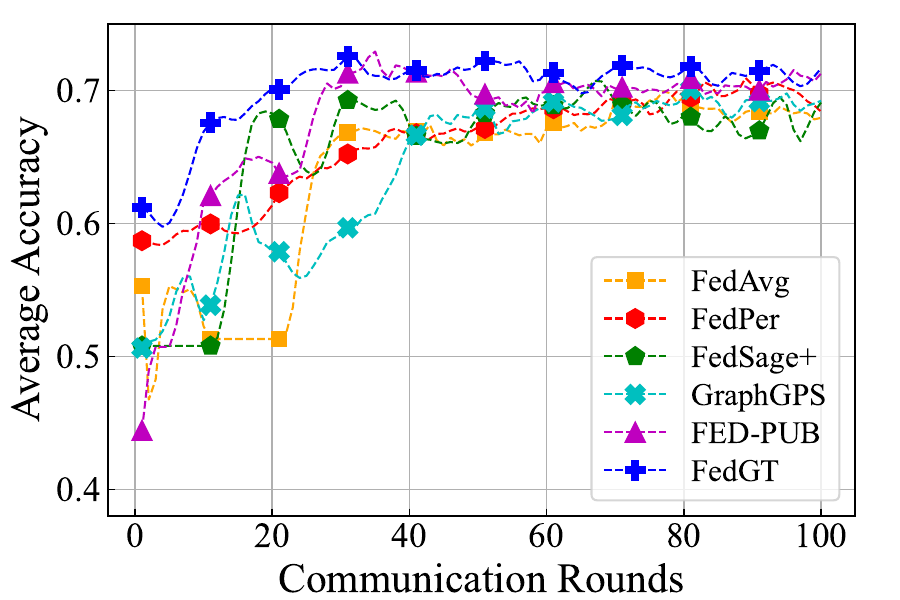}}
    \subfigure[PubMed]{\includegraphics[width=0.161\linewidth]{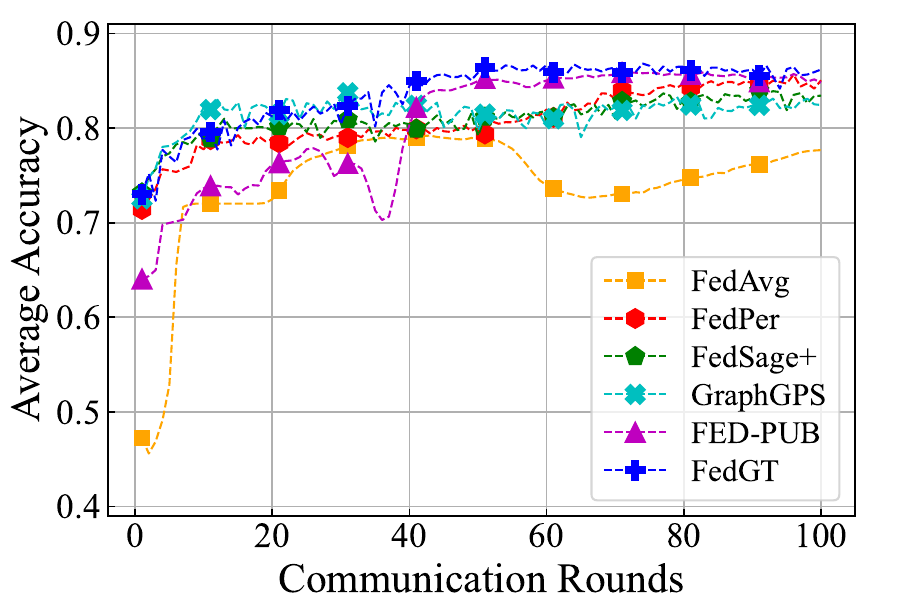}}
    \subfigure[Computer]{\includegraphics[width=0.161\linewidth]{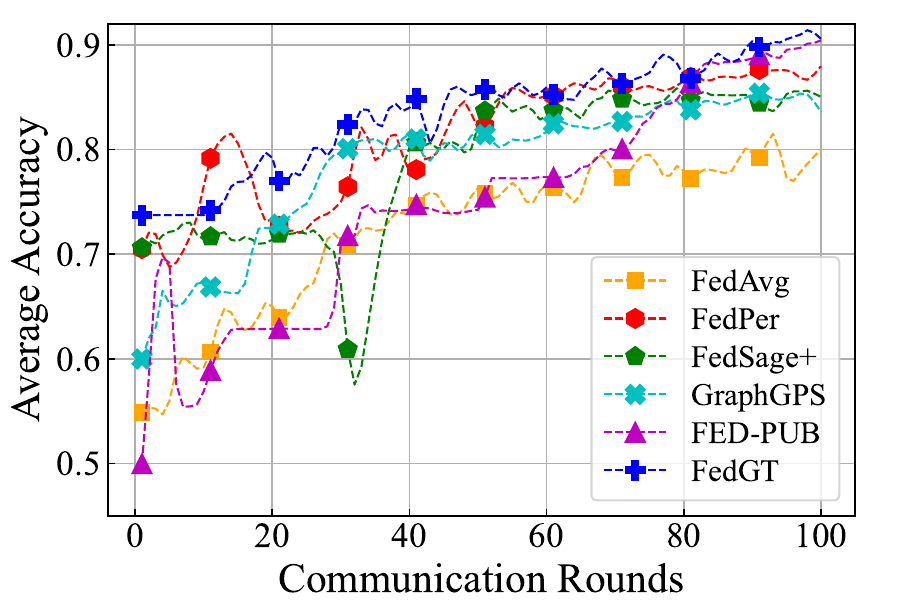}}
    \subfigure[Photo]{\includegraphics[width=0.161\linewidth]{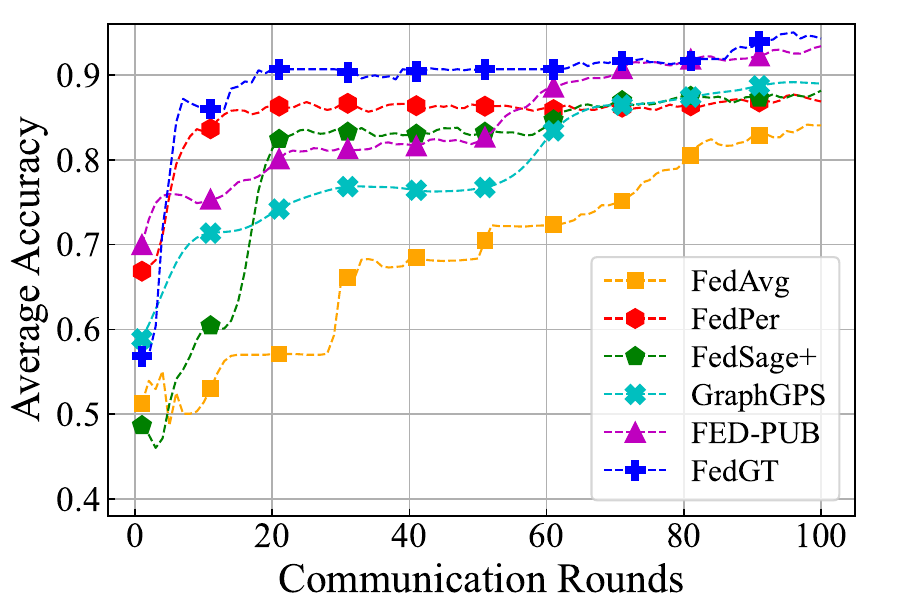}}
    \subfigure[ogbn-arxiv]{\includegraphics[width=0.161\linewidth]{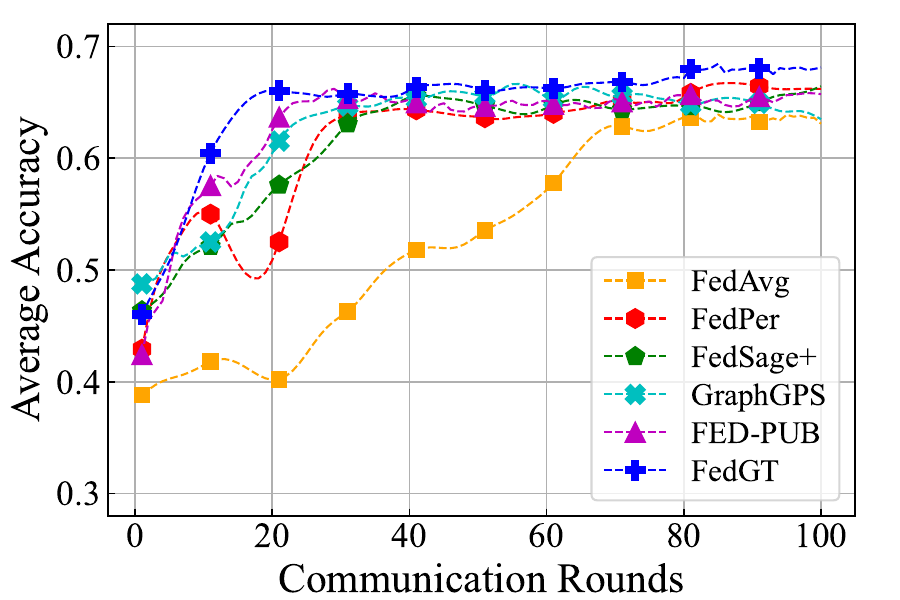}}
	\caption{The average testing accuracy in the non-overlapping setting with 10 clients over 100 rounds.}
 \label{convergence1}
\end{figure*}

\begin{figure*}[t]
	\centering
	\subfigure[Cora]{\includegraphics[width=0.161\linewidth]{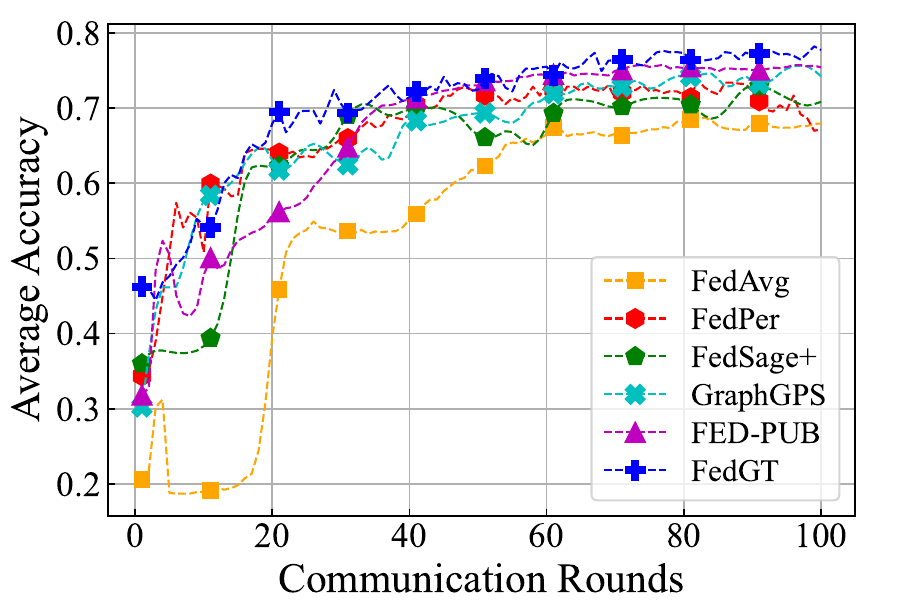}}
    \subfigure[Citeseer]{\includegraphics[width=0.161\linewidth]{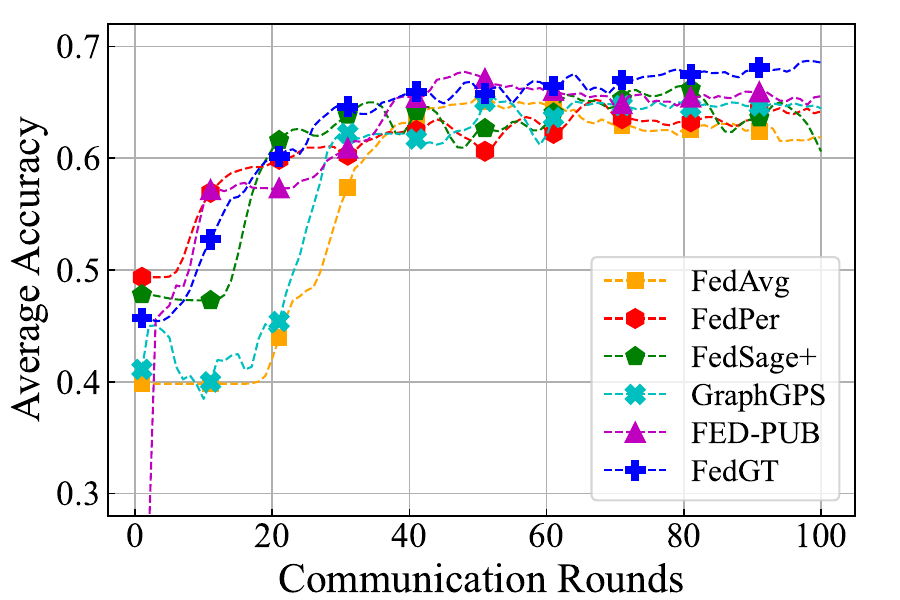}}
    \subfigure[PubMed]{\includegraphics[width=0.161\linewidth]{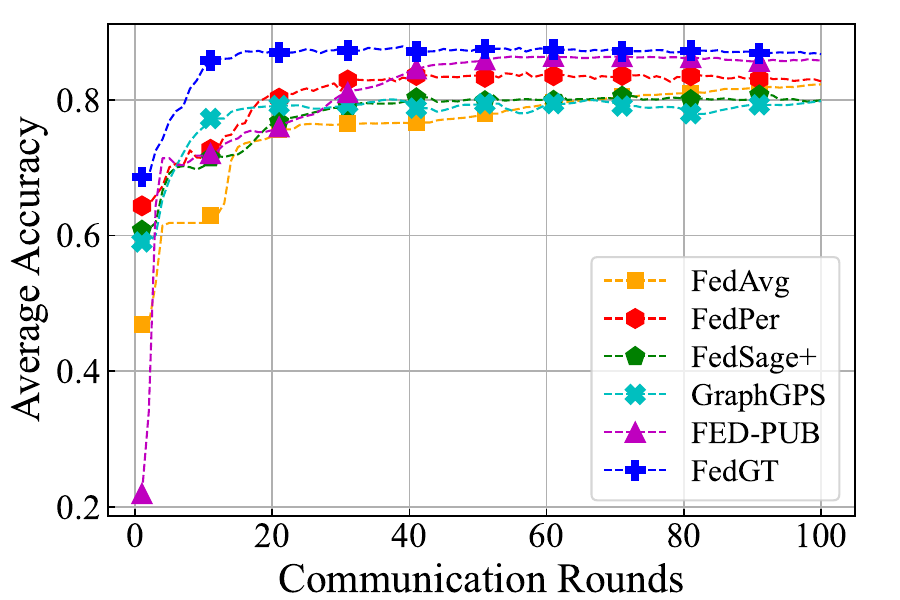}}
    \subfigure[Computer]{\includegraphics[width=0.161\linewidth]{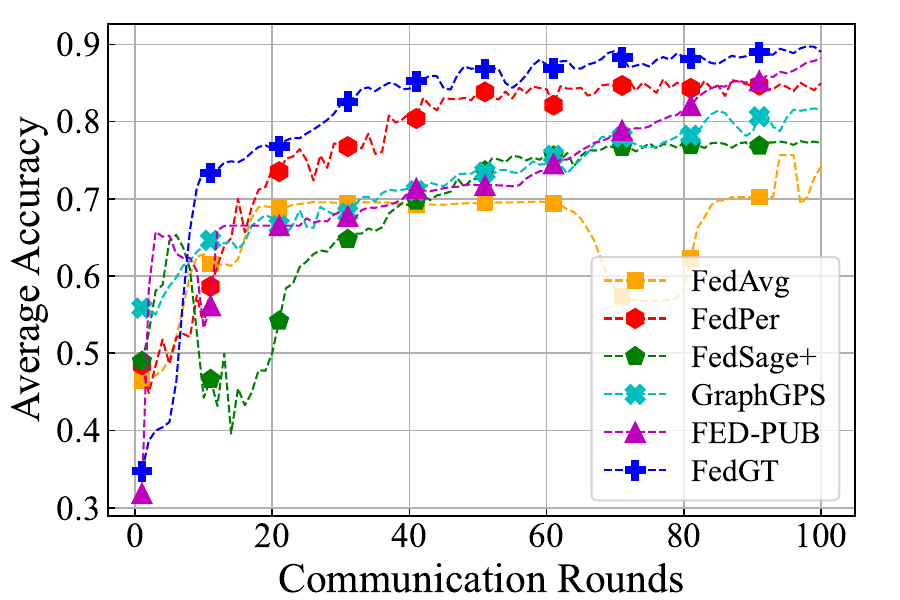}}
    \subfigure[Photo]{\includegraphics[width=0.161\linewidth]{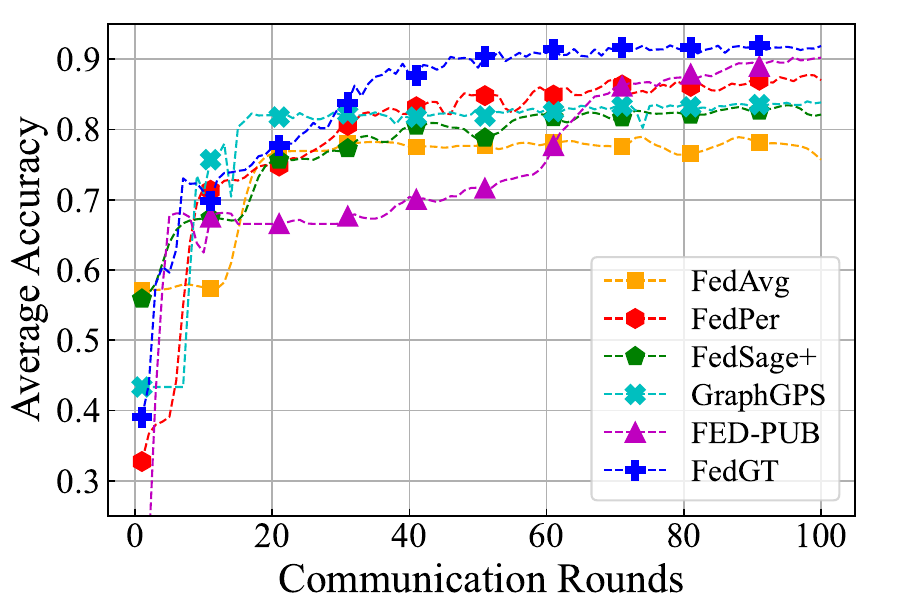}}
    \subfigure[ogbn-arxiv]{\includegraphics[width=0.161\linewidth]{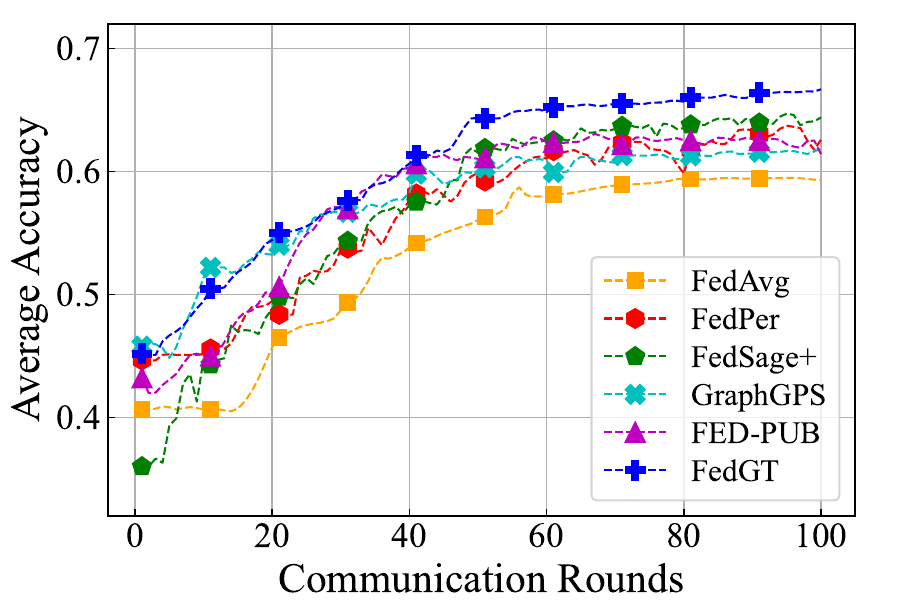}}
	\caption{The average testing accuracy in the overlapping setting with 30 clients over 100 rounds.}
 \vspace{-1em}
 \label{convergence2}
\end{figure*}
\subsection{More partitioning methods}
Besides the METIS graph partitioning algorithm used in the main text, we also try another popular graph partitioning method Louvain \citep{blondel2008fast} to evaluate FedGT (Table \ref{louvain}). Following \citep{FedSage}, we find hierarchical graph clusters on each dataset with the Louvain algorithm \citep{blondel2008fast}
and use the clustering results with 10 clusters of similar sizes to obtain subgraphs for clients. The training/validation/testing ratio is 60$\%$/20$\%$/20$\%$ according to \citep{FedSage}. In Table \ref{louvain}, we observe that our FedGT can also outperform baseline methods in the Louvain partitioning setting, showing its effectiveness and generalizability.
\begin{table}[h]
\centering
\small
\caption{Node classification results with the Louvain graph partitioning algorithms and 10 clients. We report the means and standard deviations over three different runs $(\%)$. The best results are bolded.}
\begin{tabular}{c|ccc} \toprule
Model     & Cora & CiteSeer & PubMed\\ \midrule
FedAvg        & 86.15$\pm$0.60   & 71.24$\pm$0.35  & 86.87$\pm$0.12 \\
FedPer        & 86.73$\pm$0.22   & 73.90$\pm$0.62  & 87.51$\pm$0.11 \\
GCFL+        & 86.45$\pm$0.37   & 73.84$\pm$0.24  & 87.60$\pm$0.34 \\
FedSage+   & 86.34$\pm$0.42   & 73.95$\pm$0.29 & 87.45$\pm$0.18    \\
FED-PUB &86.78$\pm$0.20  &74.03$\pm$0.31  & 87.40$\pm$0.26\\
Gophormer   & 86.02$\pm$0.28   & 72.76$\pm$0.40 & 86.92$\pm$0.43    \\
GraphGPS  & 86.10$\pm$0.53    &72.55$\pm$0.68 & 86.57$\pm$0.21\\ 
FedGT     & \textbf{87.29}$\pm$0.25 & \textbf{74.21}$\pm$0.37 & \textbf{87.95}$\pm$0.09   \\
\bottomrule
\end{tabular}
\label{louvain}
\end{table}

\subsection{Computational and Communication Cost. }
\label{app:cost}
In Table \ref{training time}, we compare the computational and communication cost of FedGT with baselines. The computational cost records the average time to finish an FL round and the communication cost measures the total size of transmitted parameters between clients and the server in a round. 
Overall, the computational and communication overhead of FedGT is acceptable, considering its consistent performance improvement compared with baseline methods.
The computational and communication cost of FedGT can be further optimized with model compression and acceleration methods such as parameter quantization \citep{deng2020model}, which we left for future exploration.
\begin{table}[h]
\centering
\small
\caption{Communication and Computational cost comparisons with baselines. Non-overlapping setting with 5 clients on the Cora dataset is used here.}
\begin{tabular}{c|cc} \toprule
Model     & Communication Cost ($\%$) & Computational Cost ($\%$)\\ \midrule
FedAvg        & 100.0   & 100.0   \\
FedPer        & 86.5   & 96.2   \\
GCFL+ &100.0 & 107.4 \\
FedSage+   & 284.8   & 162.9    \\
FED-PUB &89.6 &127.8\\
Gophormer &152.9 &118.7\\
GraphGPS  & 231.5    &146.1\\
FedGT     & 154.1   & 126.3    \\
\bottomrule
\end{tabular}
\label{training time}
\end{table}

\subsection{Influence of Node Sampling Strategies}
\label{node sampling}
Figure \ref{ns} compares three node sampling strategies for FedGT in the non-overlapping setting.  The sampling depth in GraphSAGE is set to 2 and we calculate the cosine similarity of node features in Attribute Similarity. 
For each sampling strategy, we sample the same number of nodes ($n_s = 16$) for a fair comparison. 
Specifically, $n_s$ nodes are randomly sampled in GraphSAGE's sampled node pool and the calculated Attribute Similarity is regarded as the sampling probability to sample $n_s$ nodes.
We can observe that FedGT is robust to the choice of sampling strategies and PPR has an advantage over the other two strategies. Therefore, we use PPR to sample nodes for local attention in FedGT in the default setting. 

\begin{figure}[h]
\centering
\includegraphics[width=0.40\linewidth]{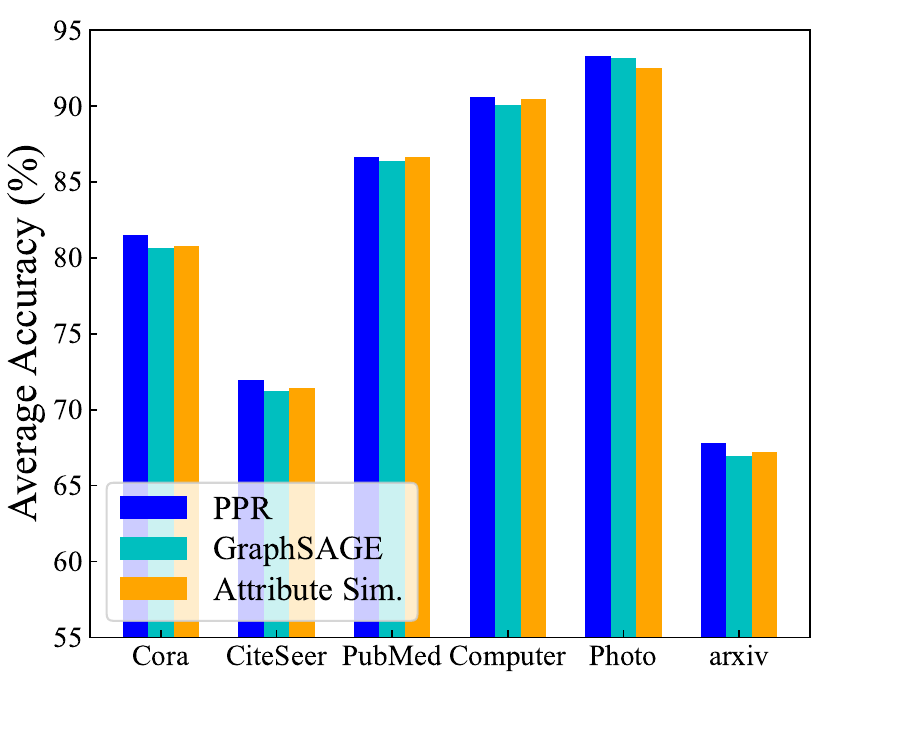}
\vspace{-1em}
\caption{Influence of node sampling strategies. }
\label{ns}
\end{figure}

\subsection{Number of Missing Links}
\label{app:missing links}
In Table \ref{number missing link:non} and \ref{number missing link}, we show the total number of missing links in the non-overlapping and overlapping settings with different numbers of clients. We can observe that there are more missing links with the increased number of clients, which leads to more severe information loss in subgraph federated learning. 
We also note that the overlapping setting has more missing links than the non-overlapping setting with the same number of clients, e.g., 10 clients. This is mainly due to the node sampling procedure in the overlapping setting.
Some links in the partitioned graph are not sampled and included in any client. 
\label{app:missing links}
\begin{table}[h]
\centering
\small
\caption{Total number of missing links with different numbers of clients (non-overlapping).}
\begin{tabular}{c|ccc} \toprule
Dataset     & 5 clients & 10 clients & 20 clients\\ \midrule
Cora        & 403   & 615  & 853 \\
CiteSeer   & 105   & 304 & 424    \\
PubMed  & 3,388    &5,969 &8,254\\
Amazon-Computer     &34,578 &65,098 &89,456    \\
Amazon-Photo      &28,928 &22,434 &33,572   \\
ogbn-arxiv     &130,428 &246,669 &290,249   \\
\bottomrule
\end{tabular}
\label{number missing link:non}
\end{table}

\begin{table}[h]
\centering
\small
\caption{Total number of missing links with respect to the number of clients (overlapping).}
\begin{tabular}{c|ccc} \toprule
Dataset     & 10 clients & 30 clients & 50 clients\\ \midrule
Cora        & 1,391   & 1,567  & 1,733 \\
CiteSeer   & 922   & 1,043 & 1,160    \\
PubMed  & 11,890    &13,630 &15,060\\
Amazon-Computer     &66,562 &95,530 &107,740    \\
Amazon-Photo      &11,197 &37,207 &45,219   \\
ogbn-arxiv     &291,656 &392,895 &457,954   \\
\bottomrule
\end{tabular}
\label{number missing link}
\end{table}

\subsection{Data Heterogeneity}
\label{app:heterogeneity}
Table \ref{heterogeneity:non} and \ref{heterogeneity:over} shows the data heterogeneity with different numbers of clients in non-overlapping and overlapping settings. To measure the data heterogeneity, we calculate the average cosine distances of label distributions between all pairs of subgraphs. We can observe that a larger number of clients leads to more severe data heterogeneity and the non-overlapping setting is more heterogeneous than the overlapping setting.
\begin{table}[h]
\centering
\small
\caption{Data heterogeneity measured by the average cosine distance of the label distributions between all pairs of subgraphs in the non-overlapping setting.}
\begin{tabular}{c|ccc} \toprule
Dataset     & 5 clients & 10 clients & 20 clients\\ \midrule
Cora         &0.574 &0.638 &0.689 \\
CiteSeer    &0.527 &0.566 &0.576    \\
PubMed   &0.345 &0.362 &0.400\\
Amazon-Computer     &0.544 &0.616 &0.654    \\
Amazon-Photo      &0.664 &0.718 &0.759   \\
ogbn-arxiv      &0.593 &0.683 &0.724   \\
\bottomrule
\end{tabular}
\label{heterogeneity:non}
\end{table}

\begin{table}[h]
\centering
\small
\caption{Data heterogeneity measured by the average cosine distance of the label distributions between all pairs of subgraphs in the overlapping setting.}
\begin{tabular}{c|ccc} \toprule
Dataset     & 10 clients & 30 clients & 50 clients\\ \midrule
Cora         &0.291 &0.549 &0.643 \\
CiteSeer   &0.338 &0.537 &0.566    \\
PubMed   &0.238 &0.370 &0.382\\
Amazon-Computer     &0.368 &0.570 &0.618    \\
Amazon-Photo      &0.308 &0.689 &0.721   \\
ogbn-arxiv     &0.427 &0.654 &0.687   \\
\bottomrule
\end{tabular}
\label{heterogeneity:over}
\end{table}

\subsection{Time Cost of Preprocessing}
\label{app:preprocess}
Table \ref{preprocessing time} shows the average time of preprocessing including calculating the PPR matrix and
positional encoding for each client on different datasets. The time cost is acceptable considering the PPR matrix and positional encoding only need to be computed once before training.

\begin{table}[h]
\centering
\small
\caption{The average time (seconds) of preprocessing including calculating the PPR matrix and positional encoding for each client in the non-overlapping setting.}
\begin{tabular}{c|ccc} \toprule
Dataset     & 5 clients & 10 clients & 20 clients\\ \midrule
Cora        & 0.24   & 0.07  & 0.03 \\
CiteSeer   & 0.17   & 0.08 & 0.02    \\
PubMed  & 5.36    &2.17 &0.76\\
Amazon-Computer     &3.94 &1.43 &0.16    \\
Amazon-Photo      &6.31 &1.52 &0.39   \\
ogbn-arxiv     &96.98& 53.14 &29.65   \\
\bottomrule
\end{tabular}
\label{preprocessing time}
\end{table}

\begin{wrapfigure}{r}{6cm}
\centering
\vspace{-1.0em}
\includegraphics[width=0.8\linewidth]{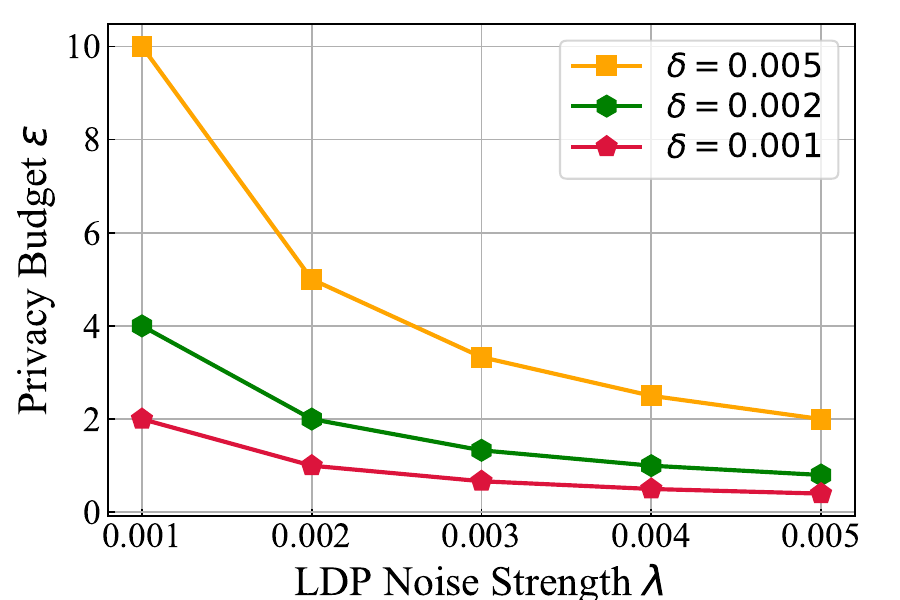}   
    \caption{Privacy budget $\epsilon$ of the LDP. }
    \label{ldp budget}
    \vspace{-1.0em}
\end{wrapfigure}
\subsection{Privacy Budget of LDP}
\label{app:privacy budget}
In local differential privacy, denote the input vector of model parameters or representations as $g$, the LDP mechanism as $\mathcal{M}$, and the clipping threshold as $\delta$, we have: $\mathcal{M}(g) = clip(g,\delta) + n$, where $n\sim Laplace(0, \lambda)$ is the Laplace noise with 0 mean and noise strength $\lambda$.
Figure \ref{ldp budget} shows the privacy budget $\epsilon$ of the LDP module, which is calculated by $\epsilon = \frac{2\delta}{\lambda}$ following previous works \citep{FedGNN,qi2020privacy}. We can observe that larger $\lambda$ and smaller $\delta$ lead to stricter privacy protection (smaller $\epsilon$). 

\subsection{Ablation Studies}
To analyze the contribution of each module, we conduct ablation studies by removing the global attention module and personalized aggregation module respectively in FedGT. As shown in Figure~\ref{ablation study}, we observe that the performance of FedGT obviously overperforms its two variants, demonstrating the effectiveness of the designed modules. Moreover, the benefits brought by each module may vary across different datasets. In particular, personalized aggregation is more important on smaller datasets such as Cora and CiteSeer. This may be explained by the fact that clients lack local training data and can only boost performance by identifying relevant clients for joint training.  
On the contrary, FedGT benefits more from global attention on larger datasets such as ogbn-arxiv. This may be because global attention helps capture the long-range dependencies on these large graphs. 

\begin{figure}[t]
	\centering
	\subfigure[t-SNE]{\includegraphics[width=0.30\linewidth]{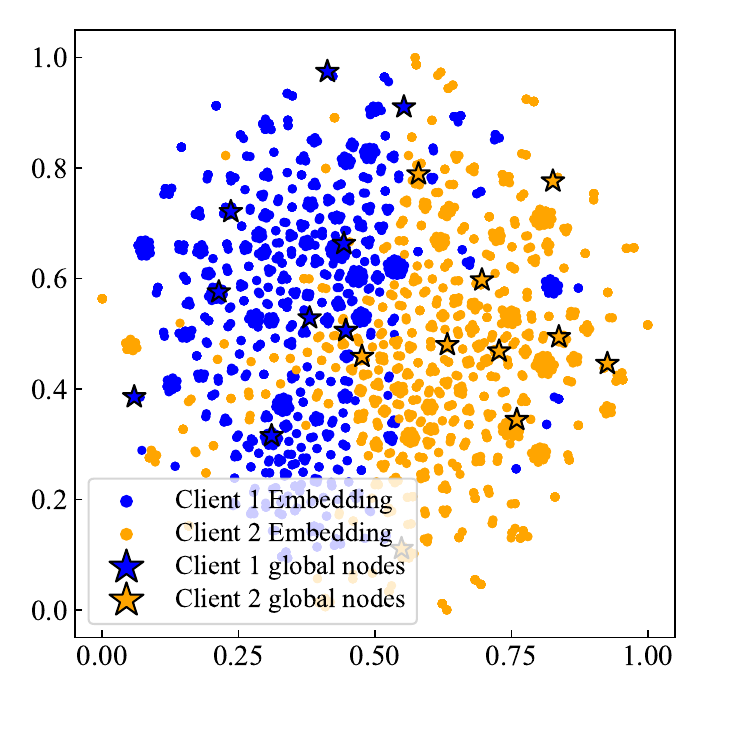}}
    \subfigure[Ablation study]{\includegraphics[width=0.42\linewidth]{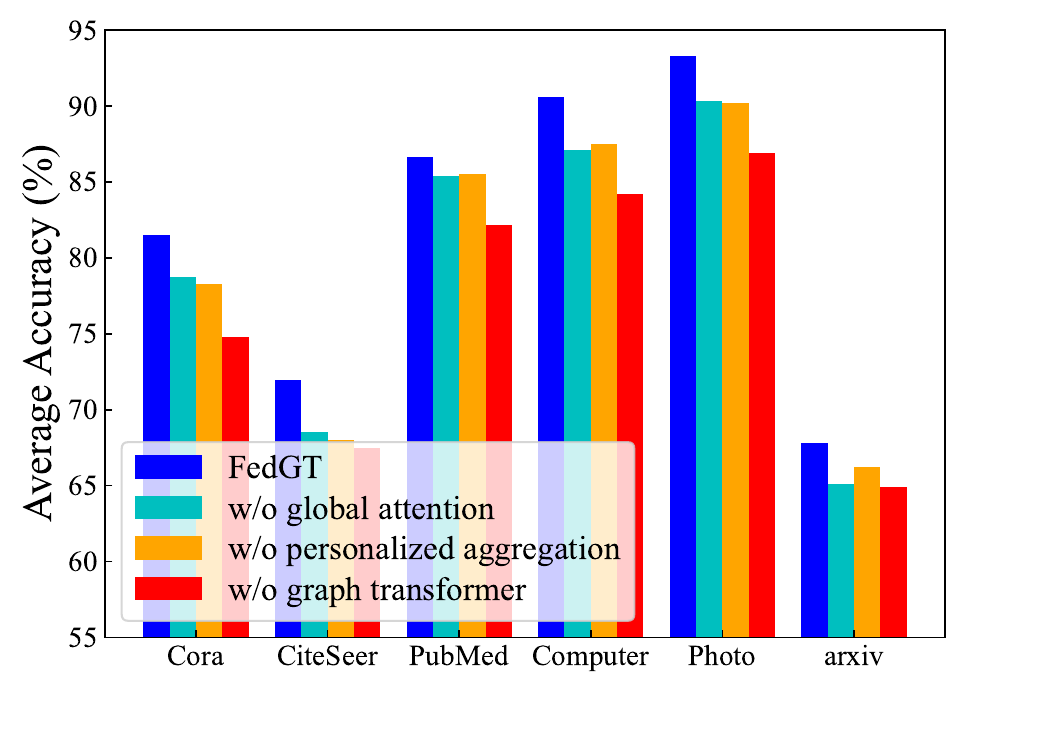}}
    \vspace{-1em}
	\caption{(a) t-SNE visualization of the node embeddings (dots) 
and global node (stars) on the Cora dataset and in the non-overlapping setting (10 clients). Different colors indicate different two clients. (b) compares FedGT with its two variants in the non-overlapping setting with 10 clients.}
\label{ablation study}
\end{figure}

\subsection{t-SNE visualization of global nodes. } In Figure \ref{ablation study}, we select two clients in the non-overlapping setting and visualize the node embeddings as well as the global nodes. We observe that the node embeddings form clusters and the global nodes can occupy the centers of clusters in each client.
Therefore, the global nodes can effectively capture the data distribution of the corresponding local subgraph.

\subsection{Unbalanced partition settings}
Here, we further explore scenarios in which clients have a varying number of nodes, with some having significantly more and others noticeably fewer. We create such a setting based on the non-overlapping setting with 10 clients. For each client, we randomly subsample 10\% to 100\% of the original nodes.  The number of nodes for each client in the default setting (uniform) and the unbalanced setting are shown in Figure. \ref{data_number}. We show the corresponding node classification results in Table. \ref{unbalanced}. We compare FedGT with the runner-up baseline FED-PUB and observe that our FedGT is robust to the distribution shift and can achieve better results than competitive baselines. 

\begin{figure*}[t]
	\centering
	\subfigure[Cora]{\includegraphics[width=0.161\linewidth]{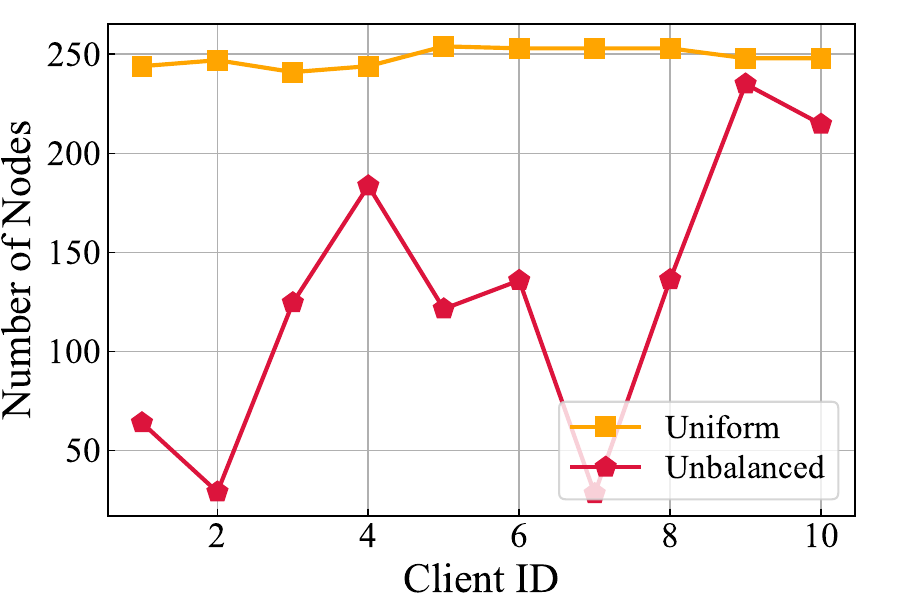}}
    \subfigure[Citeseer]{\includegraphics[width=0.161\linewidth]{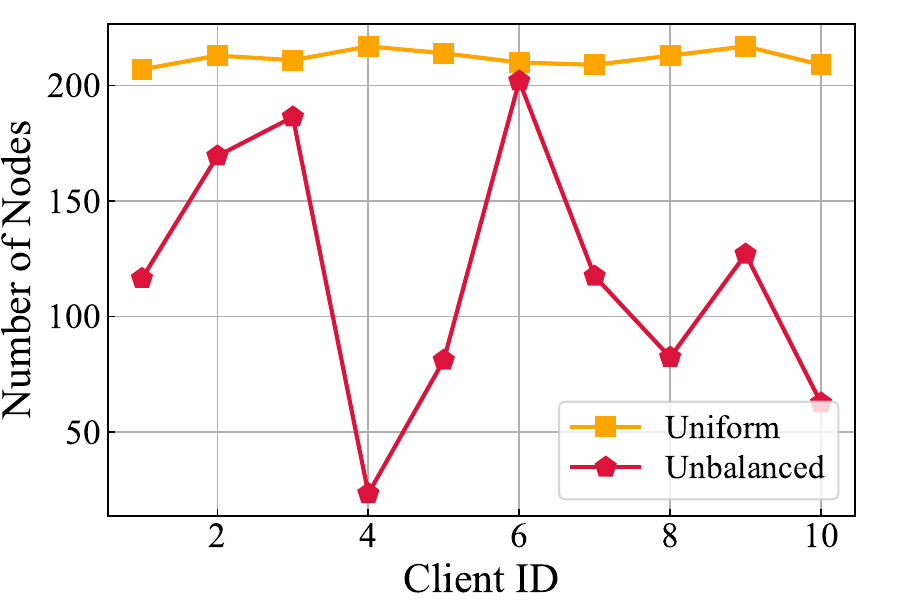}}
    \subfigure[PubMed]{\includegraphics[width=0.161\linewidth]{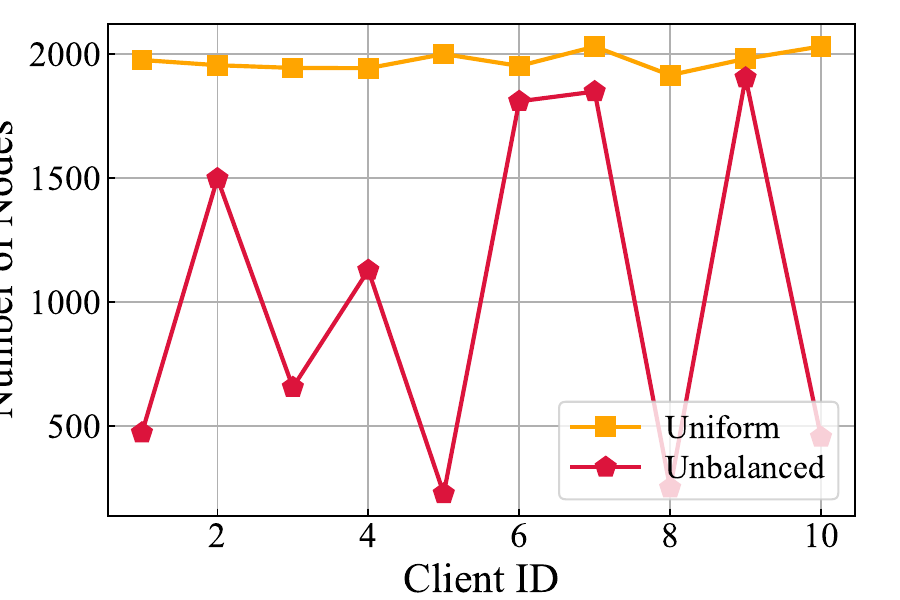}}
    \subfigure[Computer]{\includegraphics[width=0.161\linewidth]{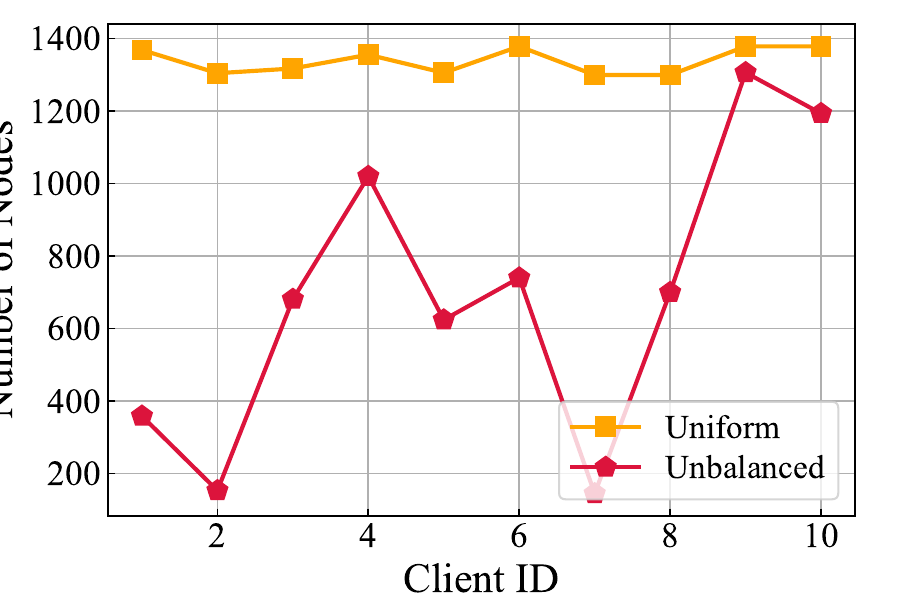}}
    \subfigure[Photo]{\includegraphics[width=0.161\linewidth]{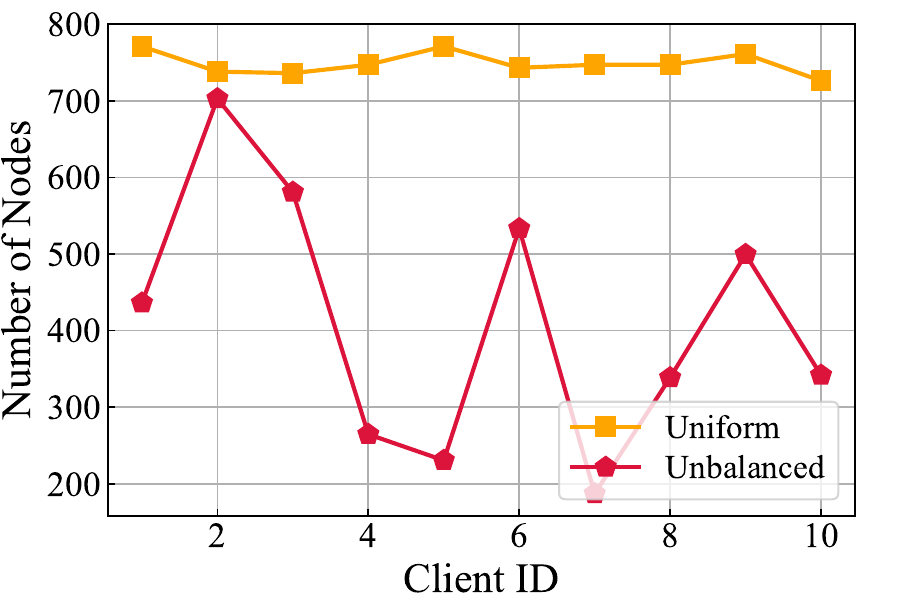}}
    \subfigure[ogbn-arxiv]{\includegraphics[width=0.161\linewidth]{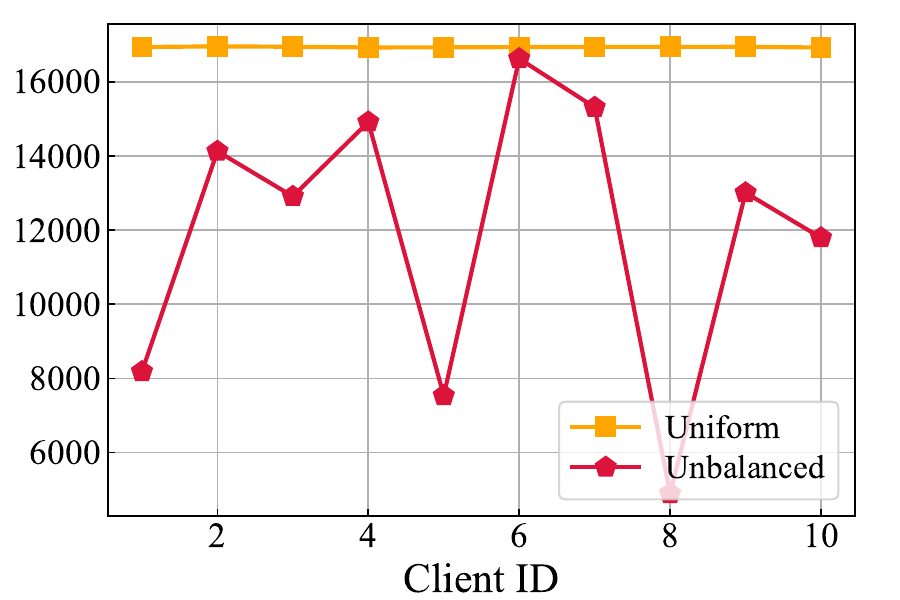}}
	\caption{The number of nodes for each data in the default setting (uniform) and the unbalanced setting.}
 \vspace{-1em}
 \label{data_number}
\end{figure*}

\begin{table}[h]
\centering
\small
\caption{Node classification results with the default/unbalanced settings and 10 clients. We report the means and standard deviations over three different runs $(\%)$. The best results of each setting are bolded.}
\scriptsize
\begin{tabular}{c|cccccc} \toprule
Model     & Cora & CiteSeer & PubMed & Computer & Photo & ogbn-arxiv\\ \midrule
FED-PUB (default)       & 81.45$\pm$0.12   & 71.83$\pm$0.61  & 86.09$\pm$0.17 & 89.73$\pm$0.16 &92.46$\pm$0.19 & 66.35$\pm$0.16\\
FedGT (default)       & \textbf{81.49}$\pm$0.41   & \textbf{71.98}$\pm$0.70  & \textbf{86.65}$\pm$0.15 & \textbf{90.59}$\pm$0.09 & \textbf{93.17}$\pm$0.24 & \textbf{67.79}$\pm$0.11 \\
\hline
FED-PUB (unbalanced) &73.51$\pm$0.40  &64.32$\pm$0.81  & 79.44$\pm$0.56 &85.69$\pm$0.48  &84.87$\pm$0.42  & 62.46$\pm$0.25\\
FedGT (unbalanced)     & \textbf{76.46}$\pm$0.35 & \textbf{66.72}$\pm$0.77 & \textbf{83.29}$\pm$0.48 & \textbf{86.47}$\pm$0.55 & \textbf{86.74}$\pm$0.36 & \textbf{65.03}$\pm$0.36  \\
\bottomrule
\end{tabular}
\label{unbalanced}
\end{table}

\section{More Discussions on Similarity Estimation}
\label{app:label distribution}

In FedGT, we do not directly use label distribution similarity since uploading label distribution may lead to an additional risk of privacy breach. For example, a group of banks may collaboratively train fraud/anomaly account detection models using subgraph federated learning. In such a case, label distributions can reflect the percentage of fraud/anomaly accounts of the bank and may undermine the bank's reputation if the percentage is large. 
The privacy issue on sharing label distributions becomes more severe in class imbalance and heterogeneous scenarios. For example, let's assume that
most of the nodes in the global graph have label 0 and only several minority nodes have label 1. Then, the server can know which clients have minority nodes by examining the uploaded label distributions, which directly undermines users' privacy. 

On the contrary, the global nodes are highly condensed vectors, which can hardly be used to infer private information. Moreover, privacy protection is further guaranteed by applying the local differential privacy scheme.

\begin{table}[t]
\centering
\small
\caption{Results of FedGT with different similarity estimation methods for personalized aggregation ($\%$). The overlapping setting and the Cora dataset are used here.}
\begin{tabular}{c|ccc} \toprule
Dataset     & 10 clients & 30 clients & 50 clients\\ \midrule
FedAvg   & 77.82   & 71.40 & 68.79    \\
Local update     &78.36 &74.71 &70.58    \\
Label distribution     & \textbf{81.79}   & 77.45  & 75.67 \\
FedGT w/o optimal transport &79.80 &76.19 &75.16   \\
FedGT     &81.73& \textbf{77.84} &\textbf{76.20}   \\
\bottomrule
\end{tabular}
\label{similarity estimation}
\end{table}

To further explore the influence of similarity estimation, we compare the experimental results with different similarity estimation methods in the overlapping setting on Cora in Table \ref{similarity estimation}. These methods include the similarity of local updates, label distributions, and the global nodes used in FedGT. We also show the results of FedGT using FedAvg as the aggregation scheme for reference. 

In Table \ref{similarity estimation}, we observe that FedGT can achieve better results than Label distribution in scenarios with a larger number of clients. It is probably due to the reason that FedGT can dynamically adjust the similarity matrix along with model training, which is more important in cases with less local data.
Specifically, at the beginning of FL training, the weighted averaging in FedGT is similar to FedAvg due to the random initialization of global nodes, which helps clients firstly collaboratively learn a general model with distributed data.
With the training of FL, the similarity matrix of FedGT gradually forms clusters, which helps train personalized models for each client. 
On the contrary, the similarity matrix of label distribution is fixed and cannot adjust according to the training of FL.
Therefore, the personalized aggregation based on the similarity of aligned global nodes is more suitable for FedGT.

\section{Complexity Analysis of FedGT}
\label{sec:complexity}
Algorithm. \ref{algo:fedgt_client} and  \ref{algo:fedgt_server} show the pseudo algorithms of FedGT.
As for clients, the PPR matrix and positional encoding only need to be computed once before training and will not bring much burden (Appendix \ref{app:preprocess}). The proposed Graph Transformer has a linear computational cost of $O(n(n_g+n_s))$. 
In the server, the extra overhead mainly comes from similarity computation $O(n_g^3)$, which is discussed in Section \ref{sec:pa}. 
Besides transmitting model updates, the transmission of global nodes brings an extra cost of $O(n_gd)$, which is also acceptable since $n_g$ and $d$ are small constants. 
Overall, the computational and communication overhead of FedGT is acceptable.

\section{Limitations and Future Works}
\label{app:limitations}
There are several potential directions to further improve our work. Firstly, FedGT tackles the missing links issue implicitly with the powerful graph transformer architecture. We have empirically verified that FedGT is robust to missing links (e.g., the classification accuracy only drops from 90.78$\%$ to 90.22$\%$ when the number of clients increases from 5 to 20 and the number of missing links increases by 54,878 on Amazon-Computer). In the future, we may combine explicit missing link recovery methods to further address the missing link challenge. 
Another limitation of FedGT is that only data heterogeneity between subgraphs/clients is considered. We are aware that there are also
data heterogeneity problems within the subgraph. In the future, we will consider the data heterogeneity within subgraphs in our model design.
Finally, we plan to explore other positional encodings (PE) in the future. 
To summarize, we believe our FedGT has shown promising results in tackling the challenges in subgraph FL. We will further improve our work to benefit the broad community.

\end{document}